








\documentclass{article}

\usepackage{microtype}
\usepackage{graphicx}
\usepackage{subcaption}
\usepackage{booktabs} 
\usepackage{multirow}
\usepackage{enumitem}

\usepackage{hyperref}



\usepackage[accepted]{icml2023}

\usepackage{amsmath}
\usepackage{amssymb}
\usepackage{mathtools}
\usepackage{amsthm}

\usepackage{algorithm}

\usepackage[noend]{algpseudocode}

\newcommand{\Predict}{\textsc{Predict}}
\newcommand{\Train}{\textsc{Train}}

\usepackage[capitalize,noabbrev]{cleveref}

\theoremstyle{plain}
\newtheorem{theorem}{Theorem}[section]

\newtheorem{lemma}[theorem]{Lemma}

\theoremstyle{definition}
\newtheorem{definition}[theorem]{Definition}

\theoremstyle{remark}

\usepackage{dsfont}
\usepackage{xcolor}


\DeclareUnicodeCharacter{2212}{\textendash}

\usepackage[textsize=tiny]{todonotes}

\icmltitlerunning{Run-Off Election: Improved Provable Defense against Data Poisoning Attacks}

\newcommand{\fcertwo}{\texttt{Certv1}}
\newcommand{\fcerthree}{\texttt{Certv2}}
\newcommand{\CertFA}{\textsc{CertFA}}
\newcommand{\mycount}{\textnormal{count}}
\newcommand{\gapmean}{\gap^{+}}
\newcommand{\pw}{\textnormal{pw}}

\newcommand{\gap}{\textnormal{gap}}

\newcommand{\animalterm}[1]{\texttt{#1}}
\newcommand{\cat}{\animalterm{cat}}
\newcommand{\dog}{\animalterm{dog}}

\newcommand{\kpar}[1]{\textbf{#1.}}
\newcommand{\rone}{\textnormal{R1}}
\newcommand{\rtwo}{\textnormal{R2}}

\newcommand{\cpred}{c^{\textnormal{pred}}}
\newcommand{\csec}{c^{\textnormal{sec}}}
\newcommand{\hsplit}{h_{\text{spl}}}
\newcommand{\hspread}{h_{\text{spr}}}
\newcommand{\mD}{\mathcal{D}}
\newcommand{\mC}{\mathcal{C}}
\newcommand{\dsym}{d_{\text{sym}}}

\newcommand{\ind}[1]{\mathds{1}\left[ #1 \right]}
\newcommand{\logit}{\text{logits}}

\newcommand{\none}{N^{\text{\rone}}}
\newcommand{\nrun}{N^{\text{\rtwo}}}

\newcommand{\elect}{\textnormal{ROE}}
\newcommand{\cert}{\texttt{Cert}}

\newcommand{\ceil}[1]{\left\lceil #1 \right\rceil}

\DeclareMathOperator*{\argmax}{arg\,max}
\newcommand{\dparoe}{DPA+ROE}
\newcommand{\faroe}{FA+ROE}
\newcommand{\roe}{ROE}
\newcommand{\dpa}{DPA}
\newcommand{\dpastar}{DPA$^*$}
\newcommand{\dpastarroe}{DPA$^*$+ROE}
\newcommand{\fa}{FA}
\usepackage{todonotes}
\newcommand{\todok}[1]{}

\begin{document}

\twocolumn[
\icmltitle{Run-Off Election: Improved Provable Defense against Data Poisoning Attacks}



\icmlsetsymbol{equal}{*}

\begin{icmlauthorlist}
\icmlauthor{Keivan Rezaei}{equal,yyy}
\icmlauthor{Kiarash Banihashem}{equal,yyy}
\icmlauthor{Atoosa Chegini}{yyy}
\icmlauthor{Soheil Feizi}{yyy}
\end{icmlauthorlist}

\icmlaffiliation{yyy}{Department of Computer Science, University of Maryland, MD, USA}

\icmlcorrespondingauthor{Keivan Rezaei}{krezaei@umd.edu}

\icmlkeywords{Machine Learning, ICML}

\vskip 0.3in
]



\printAffiliationsAndNotice{\icmlEqualContribution} 

\begin{abstract}
    In data poisoning attacks,
    an adversary tries to change a model’s prediction by
    adding, modifying, or removing samples in the training data. 
    Recently,
    \emph{ensemble-based} approaches  for obtaining \emph{provable} defenses against data poisoning have been proposed where predictions are done by taking 
    a majority vote across multiple base models. 
  In this work, we show
  that merely considering the majority vote in ensemble defenses is wasteful as it does not effectively utilize available information in the logits layers of the base models.
  Instead, we propose \emph{Run-Off Election (ROE)},
  a novel aggregation method
  based on a two-round election across the base models: In the first round, models vote for their preferred class and then a second, \emph{Run-Off} election is held between the top two classes in the first round. Based on this approach, we propose \dparoe{} and \faroe{}
  defense methods based on Deep Partition Aggregation (DPA) and
  Finite Aggregation (FA) approaches from prior work.
  We evaluate our methods on MNIST, CIFAR-10, and GTSRB and obtain improvements in certified accuracy
  by up to $3\%$-$4\%$.
  Also, by applying ROE on a boosted version of DPA \footnote{We thank anonymous reviewer for proposing this method.},
  we gain improvements around 12\%-27\% comparing to the current state-of-the-art,
  establishing {\bf a new state-of-the-art} in (pointwise) certified robustness against data poisoning.
  In many cases, our approach outperforms the state-of-the-art, even when using 32 times less computational power.
\end{abstract}

\section{Introduction}
In recent years, Deep Neural Networks (DNNs) have achieved great success in many research areas, such as computer vision \cite{he2016deep} and natural language processing \cite{chen2015convolutional}
and have become the standard method of choice in many applications.
Despite this success, these methods are vulnerable to
\emph{poisoning attacks} where the adversary manipulates the training data
in order to change the classifications of specific inputs at the test time~\cite{chen2017targeted, shafahi2018poison, gao2021learning}.
Since large datasets are obtained 
using methods such as crawling the web,
this issue has become increasingly important as deep models
are adopted in safety-critical applications.

\looseness -1
While empirical defense methods have been proposed to combat this
problem using approaches such as data augmentation and data sanitization \cite{hodge2004survey, paudice2018label, gong2020maxup, borgnia2021strong, ni2021data},
the literature around poisoning has followed something of a ``cat and mouse" game
as in the broader literature on adversarial robustness,
where defense methods are quickly broken using adaptive and stronger attack techniques \cite{carlini2019evaluating}.
To combat this, several works have focused on obtaining \emph{certifiable defenses}
that are \emph{provably robust} against the adversary, regardless of the attack method.
These works provide a \emph{certificate} for each sample that is
a guaranteed lower bound on the amount
of distortion on the training set required to change
the model's prediction.

\begin{figure*}[ht!]
  \centering
  \includegraphics[width=1\linewidth]{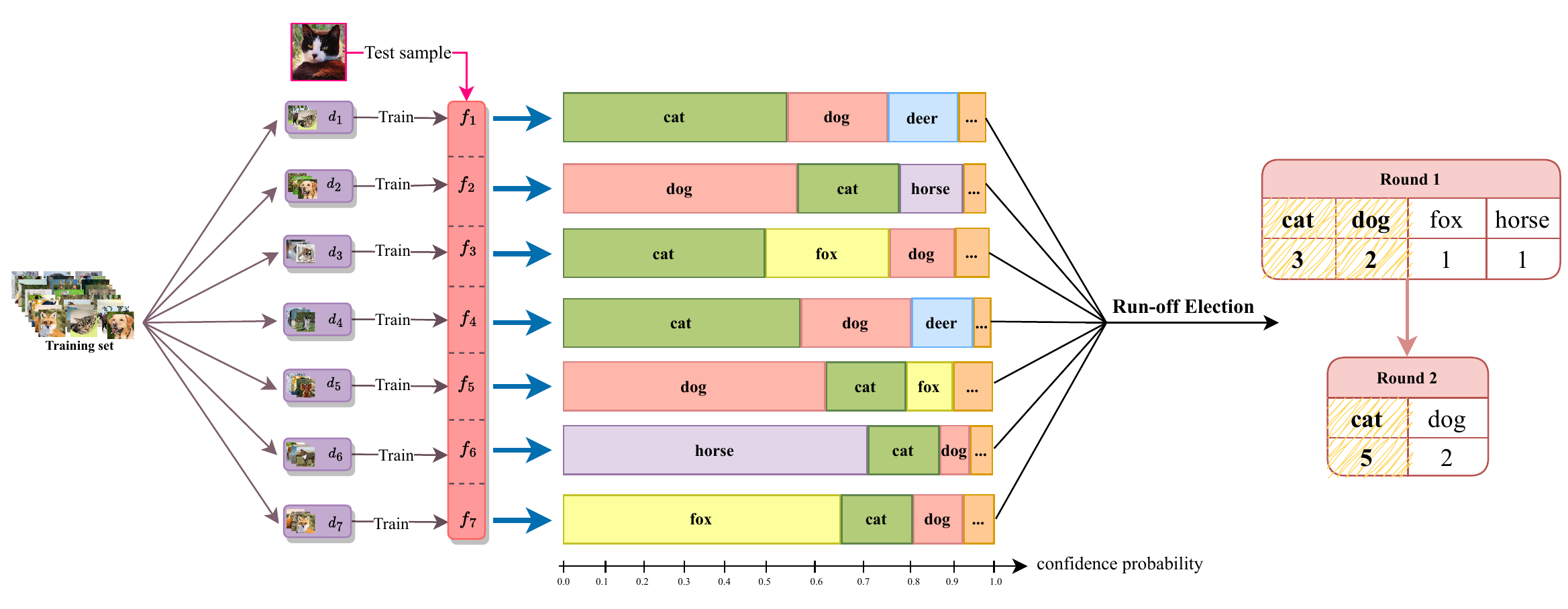}
  \caption{An example illustrating our proposed method \textbf{(Run-off Election)}.
    Training dataset is partitioned into 7 parts ($d_1, d_2, ..., d_7$) and $7$ separate classifiers are trained on each part ($f_1, f_2, ..., f_7$).
    At test time, after giving the input to the classifiers,
    we obtain the logits-layer information of each of them.
    For example, the class \dog{} has the highest logits-layer score for classifier $f_2$.
    In our method (see Section \ref{sec:method}), we hold a two-round election.
    In the first round, each model votes for its top class,
    and we find the top-two classes with the most votes.
    In the second run, each model votes for one of these two classes
    based on the logits-layer information, e.g., $f_6$ votes for \cat{}
    as it prefers it to \dog{}. 
    Existing methods output the most repeated class and can be fooled using a single poisoned sample,
    e.g., by changing the prediction of $f_3$ to \dog{}.
    As we prove in Section \ref{sec:roe_dpa_cert} (Theorem \ref{thm:certificate}), 
    our method is more robust and the adversary needs at least two poisoned samples to change the model's prediction in this example.
    This is due to the fact the \textbf{gap between the number of votes of the top two classes effectively increases in the second round.}
  }
  \label{fig:main_fig}
\end{figure*}

The most scalable provable defenses against data poisoning have been considered 
the use of
\emph{ensemble methods}
that
are composed of multiple base classifiers~\cite{levine2020deep,chen2020framework, jia2021intrinsic, wang2022improved, chen2022collective}.
At the test time, the prediction of these models
is aggregated by taking a majority vote across them.
Depending on the exact method, the certificates may be deterministic or stochastic.
For instance, Deep Partition Aggregation (DPA) method of
\cite{levine2020deep}
trains multiple models
on disjoint subsets of the training data.
Since each poisoned sample can affect at most one model, this leads
to a deterministic certificate based on the gap between
the predicted and the runner-up class. This can be wasteful, however as the models predicting other than the top two classes are ignored. 
While the choice of the partitioning scheme used
for training the models
has been extensively considered in the literature,  for both deterministic~\cite{levine2020deep, wang2022improved} and stochastic
~\cite{chen2020framework, jia2021intrinsic} partitioning schemes, all of these approaches share the problem as they take a  majority vote at test time.

In this work, we propose a novel aggregation method called Run-Off Election (ROE) that
greatly improves on existing approaches
using a \emph{two-round election} among the models.
In the first round, models vote for their preferred class, and we obtain the top two classes in terms of the number of votes.
We then hold a second, \emph{Run-Off} election round where
all base models vote for one of these two classes.
These votes are obtained using the \emph{logits layer} information of the models,
where each model votes for the class with the higher score in this layer.
By using all of the base models for prediction, we effectively 
increase the gap between the predicted class and the runner-up, leading to an improved certificate.
An~illustrative example of our approach is shown in Figure~\ref{fig:main_fig}. 
Our method is general and, in principle, can be applied
to any kind of deterministic or stochastic ensemble method.
In this paper, we focus on deterministic methods and develop \dparoe{} and \faroe{} 
defenses based on the
Deep Partition Aggregation (DPA)~\cite{levine2020deep} and Finite Aggregation (FA)~\cite{wang2022improved}
respectively and calculate the prediction certificates.

\looseness -1
\textbf{Technical challenges of calculating certificates.}
Compared to the majority vote method used in prior work,
calculating certificates in ROE  is more challenging  since characterizing the adversary's optimal actions is more complex.
Letting $\cpred$ and $\csec$ denote
the predicted class and
the runner-up class,
the adversary can change the model's prediction in many ways, as we briefly discuss below.

\vspace{-0.3cm}
\begin{enumerate}[leftmargin=*]
    \item It can get the $\csec$ elected in the second round by poisoning models
    to change their votes from the predicted class to the runner-up class.
    \item It can get $\cpred$ eliminated in the first round by ensuring that
    at least two classes receive more votes than $\cpred$ in the first round.
    \item For some $c\notin \{\cpred, \csec\}$, it can ensure that $c$ receives more votes than $\csec$ in the first round and receives more votes than $\cpred$ in the second round. This leads to the counter-intuitive situation where the adversary \emph{decreases} the votes of the runner-up class $\csec$.
\end{enumerate}
In order to obtain a unified argument for all of these cases, we introduce the concepts of a \emph{1v1 certificate} and \emph{2v1 certificate}. 
Intuitively, a 1v1 certificate bounds the number of poisoned samples required for one class to ``beat'' another class, i.e., receive more votes.
This is similar to the prediction certificate of
majority vote, with the distinction that we consider any two arbitrary classes.
A 2v1 certificate extends this idea and bounds the number of poisoned samples required for \emph{two} classes to beat another class simultaneously.

We will show that
as long as the 1v1 and 2v1 certificates can be calculated efficiently, we can use them to calculate a prediction certificate in ROE.
Taking a reduction approach is beneficial as it ensures that our method
works
for any choice of ensembles,
as long as 1v1 and 2v1 certificates can be calculated.
Focusing on \dparoe{} and \faroe{}, we show that the 1v1 certificates can be calculated using similar methods as the prediction certificates for the majority vote. Calculating 2v1 certificates are more complex, however as the adversary needs to ``spread'' its effort between two classes, and it is not straightforward how this should be done. 
For \dparoe{}, we use a \textbf{dynamic programming} based approach to recursively solve this problem. 
 The argument, however, does not extend to \faroe{}
since the underlying partitioning scheme is more complex
and the adversary is more constrained
in its actions.
For \faroe{}, we deal with this challenge using a \textbf{duality-based approach}
that considers the \textbf{convex combination of the adversary's constraint set}.

By reasoning on the adversary's behavior as above,
we obtain two separate certificates to ensure that the predicted class is unchanged
in each of the two rounds. 
Since the adversary can change our prediction
in either of the rounds,
we take the minimum of two numbers as our final certificate.
We further refer to Section~\ref{sec:roe_dpa_cert}
for more details on the calculation of the certificate
in \dparoe{} and \faroe{}.

\textbf{Empirical results.}
We evaluate our model in the context of
\emph{deterministic} robustness certificates
and observe substantial improvements over the existing state-of-the-art (FA).
\textbf{\faroe{} can improve certified accuracy by up to $4.79\%$, $4.73\%$, and $3.54\%$
respectively on MNIST, CIFAR-10, and GTSRB datasets.}
Furthermore, in some cases, \textbf{\dparoe{} also outperforms FA}
while it significantly uses less computational resources than FA. 
Note that FA improves over DPA by increasing the number of classifiers, which comes at the cost of a significant increase in training time.
Indeed, in some cases on all MNIST, CIFAR-10, and GTSRB datasets, \dparoe{} has improvements over FA while it exploits around \textbf{ 32 times less training cost}.
Finally, using a new ``boosted \dpa{}'' method called \dpastar{} that uses extra training cost to improve the base classifiers of \dparoe{}, we observe improvements of \textbf{up to $3.49\%$, $3.70\%$, and $3.44\%$
respectively on MNIST, CIFAR-10, and GTSRB datasets}, establishing a new state-of-the-art.
\footnote{
    We thank an anonymous referee for proposing this method.
    We note that the numbers reported show the difference in accuracy between \dpastarroe{} and \dpastar{} itself.
    Compared to FA, which was the previous state of the art, the improvements can be larger (as high as $27\%$).
    Depending on the setting, either \dpastarroe{}, or \faroe{} may produce more robust models.
    We refer to Section \ref{sec:experiment} for more details.
}

\kpar{Contributions}
In summary, our contributions include:
\begin{enumerate}
  \item
    We propose \emph{Run-Off election}, a novel aggregation method
    for ensemble-based defenses against data poisoning.
    Our approach is general, provable, and can be applied in combination
    with different partitioning schemes of the datasets.
    Using the partitioning schemes in DPA and FA,
    we propose the \dparoe{} and \faroe{} defense methods.
  \item
    We introduce the notion
    of 1v1 and 2v1 certificates and 
    show how they can be used to calculate \emph{provable certificates for robustness} for any ensemble method via a reduction.
    Focusing on \dparoe{} and \faroe{},
    we obtain these certificates using careful reasoning on the adversary's optimal action.
    For each round, we bound the minimum number of poisoned samples the adversary needs.
    In the first round, we propose a~dynamic~programming-based approach for characterizing
    the adversary's action in \dparoe{} and a duality-based approach
    for bounding the adversary's effect in \faroe{}. In the second round,
    we carefully bound the minimum number of samples required for electing other classes.
  \item We empirically evaluate our method on existing 
    benchmarks.
    Our experiments show that \roe{} consistently improves robustness for the \dpa{}, \fa{}, and \dpastar{} methods. 
    In some cases, we improve upon prior work
    even when using significantly fewer computational resources.
    Our method establishes the new \textbf{state-of-the-art} in certified accuracy against \textbf{general data poisoning attacks}.
\end{enumerate}

\subsection{Related work}
Certified robustness has been widely studied in the literature and
prior works have considered various notions of robustness, such as label-flipping \cite{rosenfeld2020certified}
and distributional robustness \cite{lai2016agnostic, DiakonikolasKKL16, diakonikolas2019sever}. 
Recent works have also studied the
poisoning problem theoretically using a PAC learning model
\cite{blum2021robust, gao2021learning, balcan2022robustly, hanneke2022optimal}.
In this work, we focus on (pointwise) certified robustness against data poisoning and assume a general poisoning model where the adversary may insert, delete, or modify any images.

Most closely related to our work, are the DPA \cite{levine2020deep} and the FA methods \cite{wang2022improved} that use an ensemble of classifiers to obtain \emph{deterministic} robustness certificates. A similar line of work \cite{chen2020framework, jia2021intrinsic} considers \emph{stochastic} robustness certificates. As mentioned, we improve on these works, establishing a new state-of-the-art for (pointwise) certified robustness. 
Following prior work, 
we use \emph{certified fraction},
i.e., 
the fraction of (test) data points
that are certifiable correct,
to measure robustness.
A similar but slightly different notion is studied by \cite{chen2022collective}
who
certify the \emph{test accuracy} without certifying any specific data point.

\looseness -1
Our work is closely related to the smoothing technique of \cite{cohen2019certified}.
that has been
extensively studied
both in terms of its applications~\cite{raghunathan2018semidefinite, singla2019robustness, singla2020second, chiang2020certified}
and known limitations~\cite{yang2020randomized, kumar2020curse, blum2020random}.
The original DPA method 
is inspired by derandomized smoothing~\cite{levine2020deep}.
Smoothing can also be directly applied to data poisoning attacks~\cite{weber2020rab}, though this requires strong assumptions on the adversary.

\section{Preliminaries}
\textbf{Notation.}
For a positive integer $n$,
we use $[n] := \{1, \dots, n\}$ to denote
the set of integers at most $n$.
Given two arbitrary sets $A$ and $B$,
we use $A \backslash B$ to denote
the set of all elements that are in $A$ but not in $B$
and use $A \times B$  to denote the \emph{Cartesian product},
i.e.,
$A \times B := \{(a, b) : a\in A, b\in B\}$.
We use $\dsym(A, B)$
to denote the size of
the \emph{symmetric difference} of $A$ and $B$,
i.e.,
\begin{align*}
  \dsym(A, B) := |(A \backslash B) \cup (B \backslash A)|
  .
\end{align*}
This can be thought of as a measure of distance between $A$ and $B$ and
equals the number of insertions and deletions required
for transforming $A$ to $B$.

We use $\mathcal{X}$ to denote the set of all possible unlabeled samples.
This is typically the set of all images, though our approach holds for any general input.
Similarly, we use $\mathcal{C}$ to be the set of possible labels.
We define a \emph{training set} $D$
as any arbitrary collection of labeled samples and let $\mathcal{D} := \mathcal{X} \times \mathcal{C}$.

We define a \emph{classification algorithm}
as any mapping $f : \mathcal{D} \times \mathcal{X} \to \mathcal{C}$,
where $f(D, x)$ denotes the prediction of the \emph{classifier} trained on the set $D$
and tested on the sample $x$.
We use the notation $f_{D}(x):= f(D, x)$ for convenience.
We assume that the classifier $f_D$ 
works by first
scoring each class and choosing the class with the maximum 
score.
For neural networks, this corresponds to the logits layer
of the model.
We use $f_D^{\logit}(x, c) \in \mathbb{R}$ to denote
the underlying score of class $c$ for the test sample $x$ and assume
that
$f_D^{\logit}(x, c) \ne f_D^{\logit}(x, c')$ for all $c\ne c'$.


\textbf{Threat model.}
We consider a \emph{general poisoning} model where
the adversary
can poison the training process by
adding, removing, or modifying the training set. 
Given an upper bound on the adversary's budget, i.e., the maximum amount of alteration it can make in the training set,
we aim to certify the prediction of the test samples.

Given a classification algorithm $f$, a dataset $D$ and a test sample $x$,
we define \emph{a prediction certificate}
as any
provable
lower bound on the number of samples
the adversary requires to change the prediction of $f$.
Formally,
\cert{} is a prediction certificate if
\begin{align*}
  f_{D}(x) = f_{D'}(x) \text{ if } \dsym(D, D') < \cert .
\end{align*}

\section{Proposed method: Run-Off election}
\label{sec:method}
In this section, we present our defense approach.
We start by discussing
\emph{Run-Off election}, an aggregation method
that takes as input a test sample $x$ and ensemble of $k$ models $\{f_i\}_{i=1}^{k}$,
and uses these models to make a prediction.
The method makes no assumptions about the ensemble and works for an arbitrary
choice of the models $f_i$.
In order to obtain certificates, however,
we will need to specify the choice of $f_i$.
In Section~\ref{sec:ensemble}, we consider two choices, \dparoe{} and \faroe{}.
In Section~\ref{sec:roe_dpa_cert}, we show how to obtain certificates for these methods.

\subsection{Run-Off Election}
\label{sec:roe}
As mentioned in the introduction,
our method can be seen as a \emph{two-round election}, where each model corresponds to a voter, and each class corresponds to a candidate.
Given a test sample $x$,
and an ensemble of $k$ models
$\{f_i\}_{i=1}^{k}$,
our election consists of the following two rounds.
\newcommand{\crone}{c_1^{R1}}
\newcommand{\cronep}{c_2^{R1}}
\begin{itemize}[leftmargin=*]
  \item \textbf{Round 1.}
    We first obtain the top two classes as measured
    by the number of models ``voting'' for each class.
    Formally, the setting,
    \begin{align}
        \none_c := \sum_i \ind{f_i(x) = c},
        \label{eq:def_n1}
    \end{align}
    we calculate
    the top two classes
    $\crone := \argmax_{c} \none_c$ and
    $\cronep := \argmax_{c\ne \crone} \none_c$.    
  \item \textbf{Round 2.}
    We collect the votes of each
    model in an~election between
    $\crone$ and $\cronep$. 
    Formally, for
    $(c, c') \in \{(\crone, \cronep), (\cronep, \crone)\}$, we set
    \begin{align*}
      \nrun_c := \sum_{i=1}^{k} \ind{f_i^{\logit}(x, c) > f_i^{\logit}(x, c')},
    \end{align*}
    and
    output $\elect(D, x) := \argmax_{c\in \{c_1, c_2\}} \nrun_c $.
\end{itemize}
We assume that $\argmax$ breaks ties by favoring the class with the smaller index. 
The formal pseudocode of ROE is provided in Algorithm~\ref{alg:ROE}.

\textbf{Reliability of the logits layer.}
Our method implicitly assumes that the information in the logits layer of the models is reliable enough to be useful for prediction purposes. This may seem counter-intuitive because many of the models do not even predict one of the top-two classes. In fact, these are precisely the models that were underutilized by \dpa{}. 
We note however that we only use the logits layer for a binary classification task in Round 2, which easier than multi-class classification.
Furthermore, even though the logits layer information may be less reliable than the choice of the top class, it is still useful because it provides \emph{more} information, making the attack problem more difficult for an adversary. As we will see in Section \ref{sec:experiment}, the clean accuracy of the model may slightly decrease when using our defense, but its robustness improves significantly for larger attack budgets.

\subsection{Choice of ensemble}
\label{sec:ensemble}
\looseness -1
We now present two possible choices of the ensembles $\{f_i\}_{i=1}$.
We begin by considering a disjoint partitioning scheme based on DPA and then consider a more sophisticated overlapping partitioning scheme based on FA. We denote the methods by \dparoe{} and \faroe{}, respectively.

\looseness -1
\textbf{\dparoe{}.}
In this method, training data is divided into several partitions
and a separate base classifier $f_i$ is trained on each of these partitions.
Formally, 
given a hash function $h: \mathcal{X} \to [k]$,
the training set $D$ is divided
into $k$ partitions $\{D_i\}_{i=1}^{k}$, where
$D_i := \{x\in D: h(x) = i\}$,
and the classifiers
$\{f_i\}_{i=1}^{k}$ are obtained
by training a base classifier on these partitions, i.e.,
$f_{i} := f_{D_i}$. For instance, when classifying images,
$f_i$ can be a standard ResNet model trained on $D_i$.

\textbf{\faroe{}.}
In this method,
we use two hash functions $\hsplit : \mD \to [kd]$ and $\hspread: [kd] \to [kd]^d$.
We first \emph{split} the datasets into $[kd]$ ``buckets'' using $\hsplit$. We then
create $kd$ partitions
by
\emph{spreading} these buckets, sending each bucket to all of the partitions specified
by $\hspread$. Formally,
for $i\in [kd]$, we define $D_i$ as
\begin{math}
  D_{i} := \{x \in D: i \in \hspread(\hsplit(x))\}.
\end{math}
We then train a separate classifier  $f_i := f_{D_i}$ for each
$D_i$.
A pseudocode of training FA is shown in Algorithm~\ref{alg:FA}.

\subsection{Calculating certificate}
\label{sec:roe_dpa_cert}
Since our aggregation method is more involved than taking a simple
majority vote, the adversary can affect the decision-making process in more ways.
Calculating the prediction certificate thus requires a more careful argument compared to prior work.
In order to present a unified argument,
we introduce the concept
of \emph{1v1} and \emph{2v1} certificates.
A 1v1 certificate bounds the number of poisoned samples required for one class to beat another class
while
a 2v1 certificate extends this idea and bounds the number of poisoned samples required for \emph{two} classes to beat another class.

We will show that
as long as the 1v1 and 2v1 certificates can be calculated efficiently, we can use them to calculate a~prediction certificate (Theorem~\ref{thm:certificate}).
The reduction ensures that
our approach is general and works
for any choice of ensemble,
as long as 1v1 and 2v1 certificates can be calculated.
We then provide an implementation of how to calculate those certificates for \dparoe{} (Lemmas~\ref{lem:certv1-dpa} and \ref{lem:certv2-dpa}) and \faroe{} (Lemmas~\ref{lem:certv1-fa} and \ref{lem:certv2-fa}).

We begin by defining the notion of the \emph{gap} between two classes.
\begin{definition}
    Given an ensemble
    $\{f_i\}_{i=1}^{k}$,
    a sample $x\in \mathcal{X}$,
    and classes $c, c'$, we
    define the \emph{gap} between $c, c'$ as
    \begin{align*}
        \gap(\{f_i\}_{i=1}^{k}, x, c, c') :=
         N_{c} - N_{c'} + \ind{c' > c},
    \end{align*}
    where $N_{c} := \sum_{i}\ind{f_i(x) = c}$
    For $\{f_i\}_{i=1}^{k}$ obtained
    using the training set $D$, 
    we use $\gap(D, x, c, c')$ to denote
    $\gap(\{f_i\}_{i=1}^{k}, x, c, c')$.
\end{definition}
We will omit the dependence of $\gap$ on $\{f_i\}_{i=1}^{k}$ and $x$ when it is clear from context.
We say that \emph{$c$ beats $c'$} if
$\gap(c, c') > 0$.
If the adversary wants
the class $c'$ to beat $c$, it needs
to poison the training set $D$ until $\gap(c, c')$ becomes non-positive.
We can therefore use this notion to reason on the optimal behaviour of the adversary.

We define a 1v1 certificate as follows.
\begin{definition}[1v1 certificate]
    \label{def:1v1}
    Given models $\{f_i\}_{i=1}^{k}$, a~test sample $x$, and two classes $c, c'\in \mC$
    we say $\fcertwo{} \in \mathbb{N}$ is a \emph{1v1 certificate} 
    for $c$ vs $c'$,
    if for all
    $D'$ such that
    $\dsym(D, D') < \fcertwo{}$,
    we have
    $\gap(D', x, c, c') > 0$.
\end{definition}
We note that if
$\gap(D, x, c, c')\le0$, then
$\fcertwo{}$ can only be zero.

We similarly define a 2v1 certificate as follows.
\begin{definition}[2v1 certificate]
    \label{def:2v1}
    Given models $\{f_i\}_{i=1}^{k}$, a~test sample $x$, and three classes $c, c_1, c_2\in \mC$
    we say $\fcerthree{} \in \mathbb{N}$ is a \emph{2v1 certificate} for $c$ vs $c_1, c_2$ if for all
    $D'$ such that
    $\dsym(D, D') < \fcerthree{}(\{f_i\}, x,c, c_1, c_2)$,
    we have
    $\gap(D', x, c, c_1) > 0$ 
    and
    $\gap(D', x, c, c_2) > 0$.
\end{definition}

Assuming these certificates can be calculated efficiently, we can obtain
a prediction certificate, as we outline below.
Let $\cpred$ denote the predicted and $\csec$ the runner-up class.
The adversary can change the model's prediction in one of the following two ways.

\begin{enumerate}[leftmargin=*]
    \item It can eliminate $\cpred$ in Round 1. This means it needs to choose two classes $c_1, c_2\in \mC \backslash \{\cpred\}$ and ensure
    that $c_1$ and $c_2$ both have more votes than $\cpred$ in Round 1. By definition, this requires as least $\fcerthree{}{}(\cpred, c_1, c_2)$. Since the adversary can choose $c_1, c_2$, we can lower bound the number
    of poisoned samples it needs with
    \begin{align*}
        \cert^{\rone} := 
        \min_{c_1, c_2\in \mC \backslash\{\cpred\}}
        \fcerthree{}{}(\cpred, c_1, c_2).
    \end{align*}
    \item It can eliminate $\cpred$ in Round 2. Letting $c$ denote the class that is ultimately chosen, this requires
    that $c$ makes it to Round 2 and beats $\cpred$ in Round 2.
    For $c$ to make it to Round 2, the adversary needs to ensure that it beats either $\cpred$ or $\csec$ in Round 1. 
    Given the previous case, we can assume that $\cpred$ makes it to Round 2, which means $c$ needs to beat $\csec$. 
    The number of poisoned samples required for this is at least
    \begin{align*}
        \cert^{\rtwo}_{c, 1} := \fcertwo{}(\{f_i\}_{i=1}^{k}, \csec, c).
    \end{align*}
    Note that this also includes the special case $c=\csec$ as $\fcertwo{}(\csec, \csec)=0$. 
    
    As for $c$ beating $\cpred$ in Round 2, let $g_i^{c} : \mathcal{X} \to \{c, \cpred\}$ denote the binary $\cpred$ vs $c$ classifier obtained from $f_i$.
    Formally,
    we set $g_i^c(x)$ to
    $c$ if 
    $f_i^{\logit}(x, c) > f_i^{\logit}(x, \cpred)$ and set
    it to $\cpred$ otherwise.
    We can lower bound the number of poisoned samples the adversary requires with
    \begin{align*}
        \cert^{\rtwo}_{c, 2} := 
        \fcertwo{}(\{g_i^{c}\}_{i=1}^{k}, \cpred, c)
        .
    \end{align*} 
    Overall, since the adversary can choose the class $c$, we obtain the bound
    \begin{align*}
        \cert^{\rtwo} := \min_{c \ne \cpred} \max\{\cert^{\rtwo}_{c, 1}, \cert^{\rtwo}_{c, 2}\}.
    \end{align*}
\end{enumerate}
Given the above analysis, we obtain the following theorem, a formal proof of which is provided in Appendix~\ref{pf_thm:cert}.
\begin{theorem}[\roe{} prediction certificate]\label{thm:certificate}
  \label{thm:cert}
  Let $\cpred$ denote the prediction of
  Algorithm \ref{alg:ROE}
  after training on a dataset $D$.
  For any training set $D'$, if
  \begin{math}
    \dsym(D, D') < \min
    \left\{
      \cert^{\rone},
      \cert^{\rtwo}
    \right\}
  ,
  \end{math}
  then Algorithm~\ref{alg:ROE} would still predict $\cpred$ when trained on the dataset $D'$.
\end{theorem}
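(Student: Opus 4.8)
The plan is to show the contrapositive structure directly: if the adversary fails to exhaust either budget $\cert^{\rone}$ or $\cert^{\rtwo}$, then the output on $D'$ remains $\cpred$. So I would fix any $D'$ with $\dsym(D, D') < \min\{\cert^{\rone}, \cert^{\rtwo}\}$ and argue that $\cpred$ survives both rounds of the election on $D'$. The key observation is that $\elect(D', x) \ne \cpred$ can only happen in one of two mutually exhaustive ways, mirroring the two cases in the informal analysis: either $\cpred$ is eliminated in Round 1 (it is not among the top two vote-getters), or $\cpred$ reaches Round 2 but loses the head-to-head. I would establish that the first failure mode is ruled out by $\dsym(D, D') < \cert^{\rone}$ and the second by $\dsym(D, D') < \cert^{\rtwo}$, so that ruling out both keeps $\cpred$ as the winner.

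For the Round 1 case, I would note that $\cpred$ being eliminated requires at least two distinct classes $c_1, c_2 \in \mC \backslash \{\cpred\}$ to each receive strictly more votes than $\cpred$ in Round 1 (accounting for tie-breaking via the indicator in the $\gap$ definition), i.e. $\gap(D', x, \cpred, c_1) \le 0$ and $\gap(D', x, \cpred, c_2) \le 0$. By the definition of the 2v1 certificate (Definition~\ref{def:2v1}), this is impossible whenever $\dsym(D, D') < \fcerthree{}(\cpred, c_1, c_2)$, and since $\cert^{\rone} = \min_{c_1, c_2} \fcerthree{}(\cpred, c_1, c_2)$ takes the minimum over all such pairs, the bound $\dsym(D, D') < \cert^{\rone}$ forbids elimination for every choice the adversary could make. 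Hence $\cpred$ is guaranteed to advance to Round 2.

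For the Round 2 case, I would assume $\cpred$ has advanced (justified by the previous paragraph) and suppose for contradiction that some class $c \ne \cpred$ is elected. Then $c$ must itself have reached Round 2, which since $\cpred$ occupies one of the two slots means $c$ occupies the other, so $c$ must have out-polled $\csec$ in Round 1 (the only way to be the second-place finisher alongside $\cpred$); this requires $\gap(D', x, \csec, c) \le 0$, contradicting $\dsym(D, D') < \cert^{\rtwo}_{c, 1} = \fcertwo{}(\csec, c)$. Separately, $c$ must beat $\cpred$ in the head-to-head, which by the reduction to the binary classifiers $\{g_i^c\}$ means $\gap(\{g_i^c\}, x, \cpred, c) \le 0$, contradicting $\dsym(D, D') < \cert^{\rtwo}_{c, 2} = \fcertwo{}(\{g_i^c\}, \cpred, c)$. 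Since electing $c$ forces \emph{both} of these, and $\dsym(D, D') < \cert^{\rtwo} \le \max\{\cert^{\rtwo}_{c,1}, \cert^{\rtwo}_{c,2}\}$ guarantees at least one of the two strict inequalities holds for every $c$, no such $c$ can win.

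The main obstacle I anticipate is the careful bookkeeping around the $\max$ in the definition of $\cert^{\rtwo}$: the adversary gets to pick $c$, so I must verify that for \emph{every} candidate $c \ne \cpred$, the budget $\cert^{\rtwo} = \min_c \max\{\cert^{\rtwo}_{c,1}, \cert^{\rtwo}_{c,2}\}$ is at most the larger of the two per-class certificates, ensuring at least one of the two blocking conditions is active. The subtlety is that beating $\cpred$ requires the adversary to clear \emph{both} hurdles (advance past $\csec$ in Round 1 \emph{and} win Round 2), so it suffices to block \emph{either} one — which is exactly what the $\max$ encodes and why $\dsym(D,D') < \max\{\cert^{\rtwo}_{c,1}, \cert^{\rtwo}_{c,2}\}$ suffices to stop class $c$. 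I would also take care with the degenerate case $c = \csec$, where $\cert^{\rtwo}_{\csec, 1} = \fcertwo{}(\csec, \csec) = 0$ so the Round 1 hurdle is vacuous and the entire burden correctly falls on $\cert^{\rtwo}_{\csec, 2}$, matching the intuition that $\csec$ already sits in Round 2 and need only win the runoff.
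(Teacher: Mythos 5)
Your proposal is correct and takes essentially the same route as the paper's proof: the identical case split between eliminating $\cpred$ in Round 1 (ruled out by the 2v1 certificate, minimized over pairs $c_1, c_2$) and losing Round 2 to some $c$ (ruled out by the $\max$ of the two 1v1 certificates, including the reduction to the binary classifiers $g_i^c$ and the degenerate case $c = \csec$ with $\fcertwo{}(\csec,\csec)=0$). The only difference is presentational — you argue contrapositively that no failure mode is reachable within the budget, whereas the paper states the same facts as lower bounds on the adversary's required number of poisoned samples.
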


We now show how we can obtain 1v1 and 2v1 certificates for \dparoe{} and \faroe{}.

\textbf{Certificate for \dparoe{}.}
We start with calculating $\fcertwo{}(c, c')$.
Since each poisoned sample can affect at most one model, the optimal action for the adversary is to ``flip'' a vote from $c$ to $c'$.
The adversary, therefore, requires at least
half of the gap between the votes of $c$ and $c'$. Formally, we use the following lemma, the proof of which is provided in Appendix~\ref{pf_lem:certv1-dpa}.
\begin{lemma}[\dparoe{} 1v1 certificate]
    \label{lem:certv1-dpa}
    Define \fcertwo{} as 
    \begin{math}
        \fcertwo{}(c, c') := 
        \ceil{\frac{\max\left(0, \gap(c, c')\right)}{2}}.
    \end{math}
    Then \fcertwo{} is a
    1v1 certificate for \dparoe{}.
\end{lemma}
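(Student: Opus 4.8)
The plan is to exploit the disjointness of the DPA partitions to bound how much a limited poisoning budget can move the gap, and then reduce the claim to a one-line arithmetic inequality. First I would record the structural fact that underlies every DPA-style certificate: since $D_i = \{x \in D : h(x) = i\}$, each sample that is inserted into or deleted from $D$ in passing to $D'$ lands in exactly one partition, namely the one indexed by its hash. Hence at most $m := \dsym(D, D')$ of the partitions can differ between $D$ and $D'$, and for every unchanged partition the trained classifier—and therefore its vote $f_i(x)$—is identical. So at most $m$ of the votes can change.

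The key quantitative step is to bound the per-vote effect on the gap. Writing $\gap(D', x, c, c') = N_c - N_{c'} + \ind{c' > c}$, I note that the tie-breaking indicator depends only on the labels and is untouched by poisoning, so it suffices to control $N_c - N_{c'}$. For each partition $i$ I would consider its contribution to the change of $N_c - N_{c'}$ and observe, by a short case check on the two indicators $\ind{f_i(x) = c}$ and $\ind{f_i(x) = c'}$, that an unchanged partition contributes $0$ while a changed partition contributes no less than $-2$, the extreme $-2$ occurring precisely when a single model's vote is ``flipped'' from $c$ to $c'$. Summing over the at most $m$ changed partitions yields $\gap(D', x, c, c') \geq \gap(D, x, c, c') - 2m$.

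It then remains to verify the arithmetic. Let $g := \gap(D, x, c, c')$. If $g \leq 0$ then $\fcertwo{}(c, c') = 0$ and the defining condition $\dsym(D, D') < 0$ of Definition~\ref{def:1v1} is vacuous. If $g > 0$, then any $D'$ with $m = \dsym(D, D') < \ceil{g/2}$ satisfies $m \leq \ceil{g/2} - 1$, and using $2\ceil{g/2} \leq g + 1$ gives $\gap(D', x, c, c') \geq g - 2m \geq g - (g - 1) = 1 > 0$, which is exactly what the 1v1 certificate requires.

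The only genuinely delicate point is the per-partition lower bound: one must confirm that a single partition change cannot decrease $N_c - N_{c'}$ by more than $2$. This follows from a two-line case analysis, since $\ind{f_i(x) = c}$ can drop by at most one and $\ind{f_i(x) = c'}$ can rise by at most one, with both extremes realized simultaneously only by the flip $c \to c'$. Everything else is the disjointness observation together with the ceiling inequality $2\ceil{g/2} \leq g + 1$, both routine.
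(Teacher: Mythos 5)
Your proposal is correct and follows essentially the same route as the paper's proof: each element of the symmetric difference lands in exactly one DPA partition, each changed partition can decrease $N_c - N_{c'}$ by at most $2$ (the extreme being a flip from $c$ to $c'$), and the ceiling arithmetic closes the argument. The only cosmetic difference is that you argue the contrapositive (the gap stays positive whenever $\dsym(D,D') < \ceil{g/2}$) and make the $g \le 0$ edge case and the per-partition counting via $\dsym$ fully explicit, whereas the paper phrases the same bound from the adversary's side via its Lemma on when one class beats another.
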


\looseness -1
Calculating $\fcerthree{}(c, c_1, c_2)$ is more complex as the adversary's optimal action choice is less clear.
Intuitively, while the adversary should always change votes from $c$ to either $c_1$ or $c_2$, it is not clear
which class it should choose.
To solve this issue, we use dynamic programming. Defining $\gap_{i} := \max\{0, \gap(c, c_i)\}$ for $i\in \{1, 2\}$,
we calculate  $\fcerthree{}$ as a function of $\gap_{1}, \gap_{2}$.
As long as $\gap_{1}, \gap_{2} > 2$,
an optimal adversary should first choose a poison to reduce one of the gaps and then continue the poisoning process.
This leads to a recursive formulation which we solve efficiently using dynamic programming. 
\newcommand{\dpf}{\textnormal{dp}}
Formally, we fill a matrix
$\dpf$ of size $[k + 2]^2$
where if $\min(i, j) \ge 2$ we set
\begin{align*}
    \dpf[i, j] = 1 +
    \min\{\dpf[i-1, j-2], \dpf[i-2, j-1]\},
\end{align*}
and if $\min(i, j) \leq 1$, we set
$\dpf[i, j] := \ceil{\frac{\max(i, j)}{2}}$.
We obtain the following lemma, the proof of which is in Appendix~\ref{pf_lem:certv2-dpa}.
\begin{lemma}[\dparoe{} 2v1 certificate]
    \label{lem:certv2-dpa}  
    Define
    \begin{align*}
        \fcerthree{}(c, c_1, c_2)
        := 
        \dpf[\gap_1, \gap_2]
        ,
    \end{align*}
    where $\gap_i := \max\{0, \gap(c, c_i)\}$.
    Then \fcerthree{} is a
    2v1 certificate for \dparoe{}.
\end{lemma}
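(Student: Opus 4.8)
The plan is to prove that $\dpf[\gap_1,\gap_2]$ lower bounds the number of poisoned samples an adversary needs in order to get \emph{both} $c_1$ and $c_2$ to beat $c$ simultaneously, i.e.\ to drive both $\gap(c,c_1)$ and $\gap(c,c_2)$ to non-positive values; this is exactly the guarantee required of a 2v1 certificate and the quantity used in $\cert^{\rone}$. The first step reduces poisons to ``vote changes''. In \dparoe{} the hash $h$ places each sample in a single partition, so every element of the symmetric difference of $D$ and $D'$ affects only one $D_i$; hence at most $\dsym(D,D')$ of the base models can change their vote $f_i(x)$. It therefore suffices to show that any sequence of single-model vote changes that makes both gaps non-positive must have length at least $\dpf[\gap_1,\gap_2]$.

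The conceptual crux is characterizing how one vote change can move the pair $(\gap(c,c_1),\gap(c,c_2))$. Writing a change as moving model $i$'s vote from some class $a$ to some class $b$, the reduction in $\gap(c,c_\ell)$ equals $-\Delta N_c + \Delta N_{c_\ell}$, so each gap drops by at most $2$, and a drop of $2$ in $\gap(c,c_1)$ forces $b=c_1$ (and $a=c$), which leaves $N_{c_2}$ untouched and hence reduces $\gap(c,c_2)$ by exactly $1$. \textbf{The key structural fact is thus that no single poison can reduce both gaps by $2$:} every realizable reduction pair $(r_1,r_2)$ is dominated coordinatewise by $(2,1)$ or by $(1,2)$ (wasteful moves, which may even increase a gap, are dominated as well). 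This is precisely what makes the two recursion branches $\dpf[i-2,j-1]$ and $\dpf[i-1,j-2]$ exhaustive.

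I would then finish by induction. A routine induction on $i+j$ first shows $\dpf[i,j]$ is non-decreasing in each coordinate. For the main claim I would induct on the length $n$ of the adversary's sequence of vote changes: letting the first change realize a reduction dominated by, say, $(2,1)$, the remaining $n-1$ changes bring the pair $(\gap(c,c_1)-2,\gap(c,c_2)-1)$ (or larger, coordinatewise) to non-positive values, so by monotonicity and the induction hypothesis $n-1\ge \dpf[\gap_1-2,\gap_2-1]$, and the recursion yields $n\ge 1+\dpf[\gap_1-2,\gap_2-1]\ge \dpf[\gap_1,\gap_2]$; the branch dominated by $(1,2)$ is symmetric. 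In the base region $\min(\gap_1,\gap_2)\le 1$ the bound reduces to the observation that cutting a single gap of size $g$ down to $0$ needs at least $\ceil{g/2}$ changes (each change cuts it by at most $2$), matching $\dpf[\gap_1,\gap_2]=\ceil{\max(\gap_1,\gap_2)/2}$. Combining with the first step, fewer than $\dpf[\gap_1,\gap_2]$ poisons cannot make both gaps non-positive, establishing the certificate.

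I expect the main obstacle to be making the induction fully rigorous rather than the arithmetic: the adversary's changes may be applied in any order and may temporarily \emph{increase} a gap, and gaps can overshoot below $0$, so inducting on the sequence length (together with the monotonicity of $\dpf$ and the domination property from the second step) is what lets me ignore the adversary's ordering and its suboptimal moves. The underlying structural claim — that the two gaps cannot both be reduced by $2$ with one poison — is the real content; once it is in place the recursion is forced and the rest is bookkeeping.
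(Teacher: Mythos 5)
Your proposal is correct and takes essentially the same route as the paper's proof: reduce each poisoned sample to a single base model's vote change, show by case analysis that any one change reduces the pair $(\gap(c,c_1),\gap(c,c_2))$ by at most $(2,1)$ or $(1,2)$ coordinatewise, and conclude by induction that $\dpf[\gap_1,\gap_2]$ lower-bounds the number of changes, with the base region $\min(\gap_1,\gap_2)\le 1$ handled by the $\ceil{\max(\gap_1,\gap_2)/2}$ bound. Your explicit monotonicity lemma for $\dpf$ and the domination argument simply make rigorous the ordering, overshoot, and wasteful-move bookkeeping that the paper's induction treats informally.
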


\textbf{Certificate for \faroe{}{}.}
We start with the 1v1 certificate. 
Consider a poisoned sample of the adversary and assume it falls in some bucket $i$.
By definition of buckets, this
can only affect the models $f_j$ satisfying
$j\in \hspread(i)$. 
If model $j$ votes for $c$, this can reduce the gap by at most two, and if the model votes for some class $\tilde{c} \notin \{c, c'\}$, it can reduce the gap by 1. 
This allows us to bound the effect of poisoning each bucket on the gap.
As we will see, the effect of poisoning multiple buckets is, at most,  the sum of the effects of each bucket.
Formally, we obtain the following lemma, the proof of which is in Appendix~\ref{pf_lem:certv1-fa}.
\begin{lemma}[\faroe{} 1v1 certificate]
    \label{lem:certv1-fa}
    Given two classes $c, c'$, 
    define the \emph{poisoning power} of each bucket $b\in [kd]$ as
    \begin{align*}
      \pw_{b} :=
      \sum_{i \in \hspread(b)} 2\ind{f_i(x) = c} + \ind{f_i(x) \notin \{c, c'\}}.
    \end{align*}
    Let $\fcertwo{}(c, c')$ be the smallest number such that the sum of the $\fcertwo{}$ largest values in $(\pw_{b})_{b\in [kd]}$ is at least $\gap(c, c')$. Then $\fcertwo{}{}$ is a 1v1 certificate.
\end{lemma}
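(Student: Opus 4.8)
The plan is to show that $\fcertwo(c,c')$ upper-bounds the total decrease in $\gap(c,c')$ that any $D'$ can cause, so that whenever $\dsym(D,D') < \fcertwo$ the gap remains strictly positive, which is exactly Definition~\ref{def:1v1}. First I would record the structural fact driving everything: in the FA scheme a sample $x$ lives in the single bucket $\hsplit(x)$ and therefore influences only the partitions $i \in \hspread(\hsplit(x))$. Consequently, letting $B$ be the set of buckets containing at least one element of the symmetric difference of $D$ and $D'$, each such element commits to exactly one bucket, so $|B| \le \dsym(D,D')$, and every partition $i \notin \bigcup_{b\in B}\hspread(b)$ has $D_i$ unchanged and hence $f_i(x)$ unchanged.

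Next I would bound, partition by partition, how much one classifier's vote can shrink the gap. Writing $\gap(c,c') = N_c - N_{c'} + \ind{c'>c}$ with $N_c = \sum_i \ind{f_i(x)=c}$, the tie-breaking indicator is fixed, so only $N_c - N_{c'}$ can move. If $f_i$ originally votes $c$, the best the adversary can do is flip it to $c'$, dropping $N_c-N_{c'}$ by $2$; if $f_i$ votes some $\tilde c \notin \{c,c'\}$, redirecting it to $c'$ drops the gap by $1$; and if $f_i$ already votes $c'$ every change is unhelpful, giving decrease $0$. This matches the summand $2\ind{f_i(x)=c}+\ind{f_i(x)\notin\{c,c'\}}$ defining $\pw_b$.

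I would then combine the two observations. Only partitions in $\bigcup_{b\in B}\hspread(b)$ can change their vote, so the total decrease of the gap is at most $\sum_{i\in\bigcup_{b\in B}\hspread(b)}\big(2\ind{f_i(x)=c}+\ind{f_i(x)\notin\{c,c'\}}\big)$, which is in turn at most $\sum_{b\in B}\pw_b$; here a partition lying in several $\hspread(b)$ is merely over-counted, which is harmless for an upper bound, and this is precisely where the ``effect of poisoning multiple buckets is at most the sum of per-bucket effects'' claim is discharged. Since all $\pw_b \ge 0$ and $|B|\le m := \dsym(D,D')$, the quantity $\sum_{b\in B}\pw_b$ is at most the sum of the $m$ largest values of $(\pw_b)_{b\in[kd]}$. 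Finally I invoke the defining minimality of $\fcertwo$: $m < \fcertwo$ gives $m \le \fcertwo-1$, so the sum of the $m$ largest powers is at most the sum of the $\fcertwo-1$ largest, which is strictly below $\gap(c,c')$; hence the total decrease is strictly smaller than the original gap and $\gap(D',x,c,c') > 0$.

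The main obstacle is the overlapping nature of FA's partitions: unlike \dpa{}, a single poisoned sample can move $d$ classifiers at once, so I must track buckets rather than partitions and justify both that each symmetric-difference element commits to one bucket (giving $|B|\le m$) and that summing $\pw_b$ over touched buckets safely over-estimates the true gap decrease despite partition overlaps. A minor but worth-stating point is that this certificate is deliberately conservative: I permit a single touch of bucket $b$ to flip every classifier in $\hspread(b)$, which only over-states the adversary's leverage and therefore keeps the bound sound.
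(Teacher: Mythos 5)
Your proposal is correct and takes essentially the same route as the paper: the identical per-classifier case analysis (a vote for $c$ can lower the gap by at most $2$, a vote outside $\{c,c'\}$ by at most $1$, a vote for $c'$ by $0$) yields $\pw_b$, and the same observation that summing per-bucket powers merely over-counts overlapping partitions—harmless for an upper bound—handles multiple poisoned buckets, which the paper packages as Lemma~\ref{lem:3}. The only cosmetic difference is direction: you argue directly that fewer than $\fcertwo$ alterations leave $\gap(D',x,c,c')>0$, while the paper argues contrapositively that driving the gap non-positive requires at least $\fcertwo$ poisoned buckets; your explicit remark that each element of the symmetric difference lands in exactly one bucket (so the number of touched buckets is at most $\dsym(D,D')$) is a point the paper leaves implicit.
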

Formal pseudocode for obtaining \fcertwo{} is provided in Algorithm~\ref{alg:fa-cert}.

In order to calculate $\fcerthree{}(c, c_1, c_2)$, we first observe that 
the adversary needs at least
$\max_{i}(\fcertwo{}(c, c_i))$ poisoned samples since both $c_1$ and $c_2$ need to beat $c$.
This is not necessarily enough, however, as making $c_1$ and $c_2$ beat $c$ \emph{simultaneously} may be more difficult than making each beat $c_i$ individually. In order to obtain a stronger bound, we use an approach inspired by duality
and consider the conical combination of the constraints. 
Defining $\gap_i := \max\{0, \gap(c, c_i)\}$, we observe that
if $\gap_1$ and $\gap_2$ both become non-positive, then so does every  combination $\lambda \gap_1 + \lambda' \gap_1$ for $\lambda, \lambda' \ge 0$. As a special case,
this includes
$\gapmean := \gap_1 + \gap_2$.
We can bound the number of poisoned samples for
making $\gapmean$ non-positive
using a similar argument as the 1v1 certificate.
Each bucket $b$ can only affect models $j$ such that $j\in \hspread(b)$. If $j$ votes
for $c_1$ or $c_2$, the gap cannot be reduced. 
If $j$ votes for $c$, the gap can be reduced by at most $3$ and
if $j$ votes for some $\tilde{c}\notin \{c, c_1, c_2\}$, the gap can be reduced by at most $1$. 
We Define the \emph{total poisoning power} of each bucket as
\begin{align*}
  \pw^{+}_{b} := 
  \sum_{i\in \hspread(b)} 3\ind{f_i(x) = c} + \ind{f_i(x) \notin (c, c_1, c_2)},
\end{align*}
where we hide the dependence on $c, c_1, c_2$ for brevity.
We obtain the following lemma, the proof of which is in Appendix \ref{pf_lem:certv2-fa}.
\begin{lemma}[\faroe{} 2v1 certificate]
    \label{lem:certv2-fa}
    For any $c, c_1, c_2 \in \mathcal{C}$,
    let $\fcerthree^{+}$ denote the smallest number such that the sum of the $\fcerthree^{+}$ largest values in $(\pw^{+}_{b})_{b\in [kd]}$ is at least $\gapmean$.
    For $i\in \{1, 2\}$,
    define $\fcerthree^{(i)} := \fcertwo{}(c, c_i)$. 
    Finally, define $\fcerthree{}$ as
    \begin{align*}
        \fcerthree := \max\{\fcerthree^{(1)}, \fcerthree^{(2)}, \fcerthree^{+}\}.
    \end{align*}
    Then \fcerthree{} is a 2v1 certificate for \faroe{}.
\end{lemma}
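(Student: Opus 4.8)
The plan is to show that $\fcerthree$ lower-bounds the number of poisoned samples the adversary must spend for \emph{both} $c_1$ and $c_2$ to beat $c$, i.e.\ to drive both $\gap(c, c_1)$ and $\gap(c, c_2)$ to be non-positive. Since a successful attack must simultaneously satisfy three necessary conditions, I would produce three separate lower bounds and take their maximum. The first two, $\fcerthree^{(1)}$ and $\fcerthree^{(2)}$, are immediate: making $c_i$ beat $c$ alone already costs at least $\fcertwo(c, c_i)$ by the 1v1 certificate (Lemma~\ref{lem:certv1-fa}), so making both beat $c$ costs at least $\max\{\fcerthree^{(1)}, \fcerthree^{(2)}\}$. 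The entire technical content lies in the third bound $\fcerthree^{+}$, obtained from the conical-combination (duality) idea: if both $\gap(c, c_1) \le 0$ and $\gap(c, c_2) \le 0$, then in particular the truncated sum $\gapmean = \gap_1 + \gap_2$ is driven from its initial value down to $0$, so it suffices to lower-bound the cost of this single relaxed constraint.

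To bound the cost of making $\gapmean$ non-positive, I would mimic the 1v1 argument of Lemma~\ref{lem:certv1-fa} but track the combined quantity $\gap(c, c_1) + \gap(c, c_2) = 2N_c - N_{c_1} - N_{c_2} + \text{const}$. First I would bound the effect of poisoning a \emph{single} bucket $b$ by analysing, for each affected model $j \in \hspread(b)$, the largest decrease it can cause. A model currently voting for $c$ can contribute at most $3$ (removing its $c$-vote lowers the sum by $2$, and re-routing it to $c_1$ or $c_2$ lowers it by a further $1$); a model voting for some $\tilde c \notin \{c, c_1, c_2\}$ can contribute at most $1$ (by moving it onto $c_1$ or $c_2$); and, crucially, a model already voting for $c_1$ or $c_2$ contributes $0$, since shifting it from $c_1$ to $c_2$ (or vice versa) decreases one gap by exactly the amount it increases the other. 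Summing over $j \in \hspread(b)$ yields precisely the per-bucket bound $\pw^{+}_{b}$. This cancellation for votes on $\{c_1, c_2\}$ is exactly why the symmetric combination $\lambda = \lambda' = 1$ is the right conical combination to take.

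Next I would aggregate over buckets exactly as in the 1v1 proof. Each poisoned sample falls in a single bucket under $\hsplit$, so $m := \dsym(D, D')$ poisoned samples touch at most $m$ distinct buckets; and because a model reachable from several touched buckets can be flipped only once while being counted in each of their powers, the total achievable decrease of $\gap(c, c_1) + \gap(c, c_2)$ is at most $\sum_{b \in S} \pw^{+}_{b}$ over the touched set $S$, hence at most the sum of the $m$ largest entries of $(\pw^{+}_{b})_{b}$. Since making both classes beat $c$ forces this sum to be at least $\gapmean$, the smallest feasible $m$ is $\fcerthree^{+}$. Combining the three necessary conditions gives the overall lower bound $\fcerthree = \max\{\fcerthree^{(1)}, \fcerthree^{(2)}, \fcerthree^{+}\}$, which is the claimed 2v1 certificate.

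The step I expect to be the main obstacle is the single-bucket case analysis for the combined gap — in particular rigorously justifying the $+3$ contribution of a $c$-voting model and the cancellation for $c_1/c_2$-voting models — together with reusing the over-counting bookkeeping of Lemma~\ref{lem:certv1-fa} to pass from ``buckets touched'' to ``number of poisoned samples''. I would also take care with the truncation $\gap_i = \max\{0, \gap(c, c_i)\}$, so that ``both gaps non-positive'' indeed corresponds to reducing $\gapmean$ by exactly its initial value rather than by the untruncated sum, which is what makes $\fcerthree^{+}$ a valid (and not merely heuristic) bound.
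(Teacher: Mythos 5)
Your proposal matches the paper's proof essentially step for step: the same three necessary conditions combined by a maximum, the same per-bucket case analysis giving contributions of $3$, $1$, and $0$ to the combined gap (which is exactly $\pw^{+}_{b}$), and the same over-counting argument (the paper's Lemma~\ref{lem:3}) to pass from the set of touched buckets to the number of poisoned samples. Your closing remark about the truncation $\gap_i = \max\{0, \gap(c, c_i)\}$ is if anything more careful than the paper, whose appendix proof works with the raw gaps and does not explicitly address this corner case.
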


\section{Evaluation}
\label{sec:experiment}

\begin{table*}[tb]
    \caption{
        Certified fraction of \dpastarroe{}, \faroe{}, and original FA with various values of hyperparameter $d$
        with respect to different attack sizes $B$.
        Improvements of \dpastarroe{} and \faroe{} over the original FA (with same $d$) are highlighted in blue if they are positive and red otherwise.
    }
    \vskip 0.15in
    \label{table:roe_fa_vs_fa}
    \resizebox{\linewidth}{!}{
    \begin{tabular}{|c||c||c|c||c|c|c|c|c|}
    \hline
        dataset & $k$ & method & $d$ & \multicolumn{5}{|c|}{certified fraction}\\ 
        \hline\hline
        \multirow{5}{*}{MNIST}      & \multirow{4}{*}{1200}    & \multicolumn{2}{c||}{}                    & $B\leq 100$ & $B\leq 200$ & $B\leq 300$ & $B\leq 400$ & $B\leq 500$\\\cline{3-9}
                                    &                          & \footnotesize{FA}             & \multirow{3}{*}{16}        & 92.75\% & 87.89\% & 78.91\% & 62.42\% & 31.97\%\\\cline{5-9}
                                    &                          & \footnotesize{\faroe{}}       &         & 92.80\%{\color{blue}(+0.05\%)} & 88.09\%{\color{blue}(+0.2\%)} & 80.26\%{\color{blue}(+1.35\%)} & 65.31\%{\color{blue}(+2.89\%)} & 36.76\%{\color{blue}(+4.79\%)} \\\cline{5-9}
                                    &                          & \footnotesize{\dpastarroe{}}       &         & 92.70\%{\color{red}(-0.05\%)} & 88.41\%{\color{blue}(+0.52\%)} & 81.75\%{\color{blue}(+2.84\%)} & 69.67\%{\color{blue}(+7.25\%)} & 44.81\%{\color{blue}(+12.84\%)} \\\cline{3-9}
        \hline\hline
        \multirow{10}{*}{CIFAR-10}  & \multirow{7}{*}{50}      & \multicolumn{2}{c||}{}                    & $B\leq 5$ & $B\leq 10$ & $B\leq 15$ & $B\leq 18$ & $B\leq 20$\\\cline{3-9}
                                    &                          & \footnotesize{FA}             & \multirow{3}{*}{16}        & 60.55\% & 48.85\% & 34.61\% & 25.46\% & 19.90\%\\\cline{5-9}
                                    &                          & \footnotesize{\faroe{}}       &         & 61.71\%{\color{blue}(+1.16\%)} & 51.18\%{\color{blue}(+2.33\%)} & 37.12\%{\color{blue}(+2.51\%)} & 28.49\%{\color{blue}(+3.03\%)} & 22.08\%{\color{blue}(+2.18\%)}\\\cline{5-9}
                                    &                          & \footnotesize{\dpastarroe{}}  &         & 61.87\%{\color{blue}(+1.32\%)} & 52.71\%{\color{blue}(+3.86\%)} & 41.51\%{\color{blue}(+6.9\%)} & 33.42\%{\color{blue}(+7.96\%)} & 27.47\%{\color{blue}(+7.57\%)} \\\cline{3-9}
                                    &                          & \footnotesize{FA}             & \multirow{3}{*}{32}        & 61.31\% & 50.31\% & 36.03\% & 26.55\% & 19.93\%\\\cline{5-9}
                                    &                          & \footnotesize{\faroe{}}       &         & 62.56\%{\color{blue}(+1.25\%)} & 52.55\%{\color{blue}(+2.24\%)} & 38.83\%{\color{blue}(+2.8\%)} & 29.05\%{\color{blue}(+2.5\%)} & 21.97\%{\color{blue}(+2.04\%)}\\\cline{5-9}
                                    &                          & \footnotesize{\dpastarroe{}}  &         & 65.99\%{\color{blue}(+4.68\%)} & 61.51\%{\color{blue}(+11.2\%)} & 56.13\%{\color{blue}(+20.1\%)} & 51.83\%{\color{blue}(+25.28\%)} & 47.61\%{\color{blue}(+27.68\%)}\\\cline{2-9}
                                    & \multirow{7}{*}{250}     & \multicolumn{2}{c||}{}                    & $B\leq 10$ & $B\leq 20$ & $B\leq 40$ & $B\leq 50$ & $B\leq 60$\\\cline{3-9}
                                    &                          & \footnotesize{FA}             & \multirow{3}{*}{8}         & 45.38\% & 36.05\% & 20.08\% & 14.39\% & 9.70\%\\\cline{5-9}
                                    &                          & \footnotesize{\faroe{}}       &          & 46.80\%{\color{blue}(+1.42\%)} & 38.56\%{\color{blue}(+2.51\%)} & 23.61\%{\color{blue}(+3.53\%)} & 17.86\%{\color{blue}(+3.47\%)} & 13.06\%{\color{blue}(+3.36\%)}\\\cline{5-9}
                                    &                          & \footnotesize{\dpastarroe{}{}}       &          & 47.14\%{\color{blue}(+1.76\%)} & 39.32\%{\color{blue}(+3.27\%)} & 25.41\%{\color{blue}(+5.33\%)} & 19.68\%{\color{blue}(+5.29\%)} & 15.02\%{\color{blue}(+5.32\%)}\\\cline{3-9}
                                    &                          & \footnotesize{FA}             & \multirow{3}{*}{16}        & 46.52\% & 37.56\% & 21.99\% & 15.79\% & 11.09\%\\\cline{5-9}
                                    &                          & \footnotesize{\faroe{}}       &         & 48.33\%{\color{blue}(+1.81\%)} & 40.71\%{\color{blue}(+3.15\%)} & 26.38\%{\color{blue}(+4.39\%)} & 20.52\%{\color{blue}(+4.73\%)} & 14.64\%{\color{blue}(+3.55\%)}\\\cline{5-9}
                                    &                          & \footnotesize{\dpastarroe{}}       &         & 46.88\%{\color{blue}(+0.36\%)} & 39.50\%{\color{blue}(+1.94\%)} & 25.49\%{\color{blue}(+3.5\%)} & 19.83\%{\color{blue}(+4.04\%)} & 15.04\%{\color{blue}(+3.95\%)}\\\cline{3-9}
        \hline\hline
        \multirow{10}{*}{GTSRB}     & \multirow{7}{*}{50}      & \multicolumn{2}{c||}{}                    & $B\leq 5$ & $B\leq 10$ & $B\leq 15$ & $B\leq 20$ & $B\leq 22$\\\cline{3-9}
                                    &                          & \footnotesize{FA}             & \multirow{3}{*}{16}        & 82.71\% & 74.66\% & 63.77\% & 47.52\% & 35.54\%\\\cline{5-9}
                                    &                          & \footnotesize{\faroe{}}       &         & 82.59\%{\color{red}(--0.12\%)} & 75.55\%{\color{blue}(+0.89\%)} & 65.47\%{\color{blue}(+1.7\%)} & 50.33\%{\color{blue}(+2.81\%)} & 38.89\%{\color{blue}(+3.35\%)}\\\cline{5-9}
                                    &                          & \footnotesize{\dpastarroe{}}       &         & 83.67\%{\color{blue}(+0.96\%)} & 77.84\%{\color{blue}(+3.18\%)} & 70.63\%{\color{blue}(+6.86\%)} & 57.97\%{\color{blue}(+10.45\%)} & 49.33\%{\color{blue}(+13.79\%)}\\\cline{3-9}
                                    
                                    &                          & \footnotesize{FA}             & \multirow{3}{*}{32}        & 83.52\% & 76.26\% & 66.32\% & 49.68\% & 38.31\%\\\cline{5-9}
                                    &                          & \footnotesize{\faroe{}}       &         & 83.61\%{\color{blue}(+0.1\%)} & 77.07\%{\color{blue}(+0.81\%)} & 67.83\%{\color{blue}(+1.51\%)} & 51.81\%{\color{blue}(+2.14\%)} & 41.61\%{\color{blue}(+3.29\%)}\\\cline{5-9}
                                    &                          & \footnotesize{\dpastarroe{}}       &         & 83.67\%{\color{blue}(+0.15\%)} & 77.93\%{\color{blue}(+1.66\%)} & 70.67\%{\color{blue}(+4.35\%)} & 58.38\%{\color{blue}(+8.71\%)} & 49.61\%{\color{blue}(+11.3\%)}\\\cline{2-9}
                                    & \multirow{7}{*}{100}     & \multicolumn{2}{c||}{}                    & $B\leq 5$ & $B\leq 15$ & $B\leq 20$ & $B\leq 25$ & $B\leq 30$\\\cline{3-9}
                                    &                          & \footnotesize{FA}             & \multirow{3}{*}{16}        & 48.19\% & 33.95\% & 25.96\% & 18.92\% & 13.82\%\\\cline{5-9}
                                    &                          & \footnotesize{\faroe{}}       &         & 48.00\%{\color{red}(--0.19\%)} & 35.76\%{\color{blue}(+1.81\%)} & 28.92\%{\color{blue}(+2.95\%)} & 22.30\%{\color{blue}(+3.38\%)} & 16.32\%{\color{blue}(+2.49\%)}\\\cline{5-9}
                                    &                          & \footnotesize{\dpastarroe{}}       &         & 47.50\%{\color{red}(--0.69\%)} & 36.48\%{\color{blue}(+2.53\%)} & 30.60\%{\color{blue}(+4.64\%)} & 24.63\%{\color{blue}(+5.71\%)} & 19.26\%{\color{blue}(+5.44\%)}\\\cline{3-9}
                                    &                          & \footnotesize{FA}             & \multirow{3}{*}{32}        & 48.39\% & 34.96\% & 27.05\% & 19.83\% & 14.47\%\\\cline{5-9}
                                    &                          & \footnotesize{\faroe{}}       &         & 48.15\%{\color{red}(--0.25\%)} & 36.81\%{\color{blue}(+1.84\%)} & 29.85\%{\color{blue}(+2.8\%)} & 23.37\%{\color{blue}(+3.54\%)} & 17.41\%{\color{blue}(+2.95\%)}\\\cline{5-9}
                                    &                          & \footnotesize{\dpastarroe{}}       &         & 47.66\%{\color{red}(--0.74\%)} & 36.66\%{\color{blue}(+1.69\%)} & 30.70\%{\color{blue}(+3.66\%)} & 24.71\%{\color{blue}(+4.88\%)} & 19.33\%{\color{blue}(+4.87\%)}\\\cline{3-9}
        \hline
    \end{tabular}}
\end{table*}

\begin{figure*}[tbp]
  \centering
  \begin{subfigure}{1\linewidth}
    \centering
    \begin{minipage}{0.31\linewidth}
      \includegraphics[width=1\linewidth]{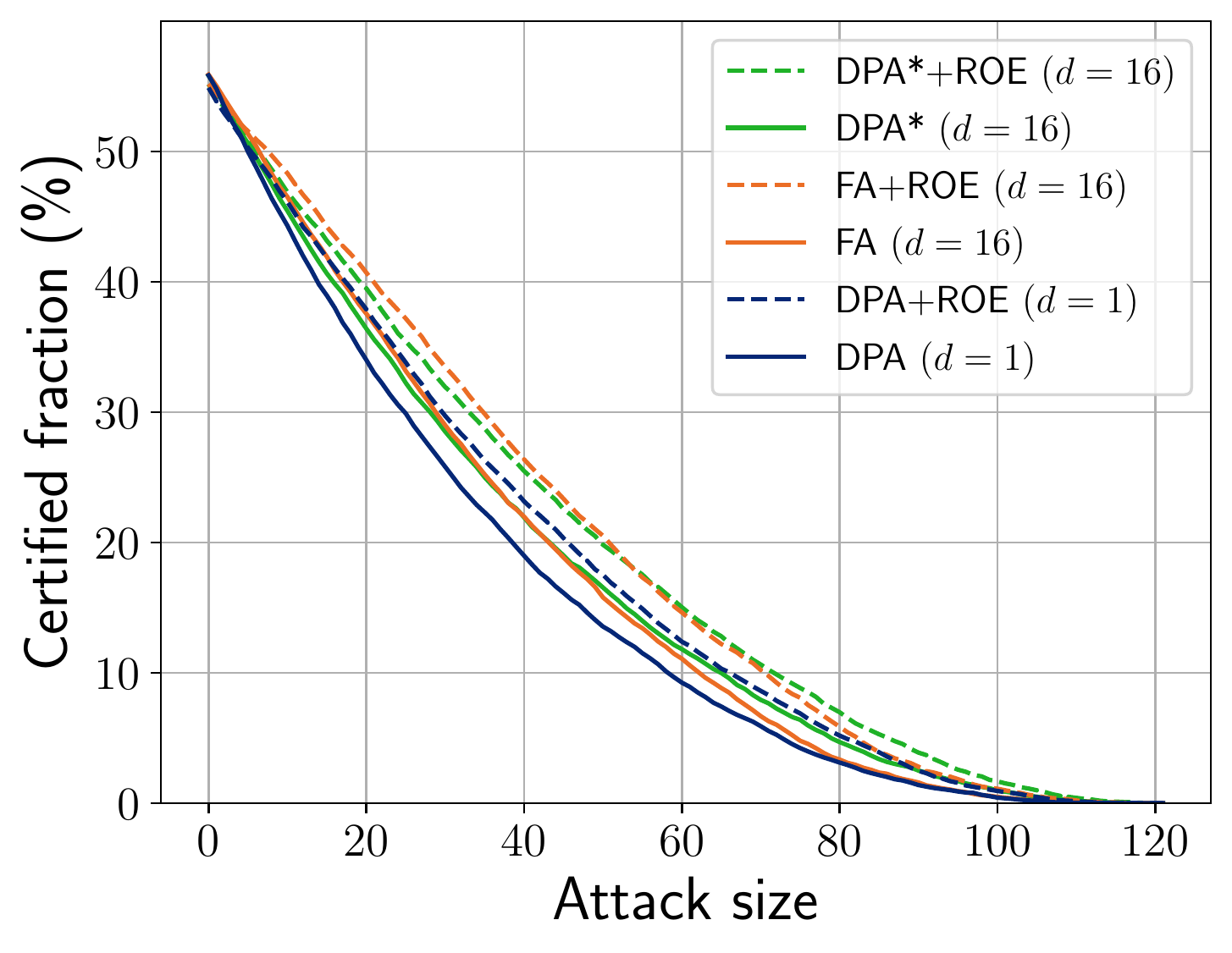}
    \end{minipage}
    \begin{minipage}{0.31\linewidth}
      \includegraphics[width=1\linewidth]{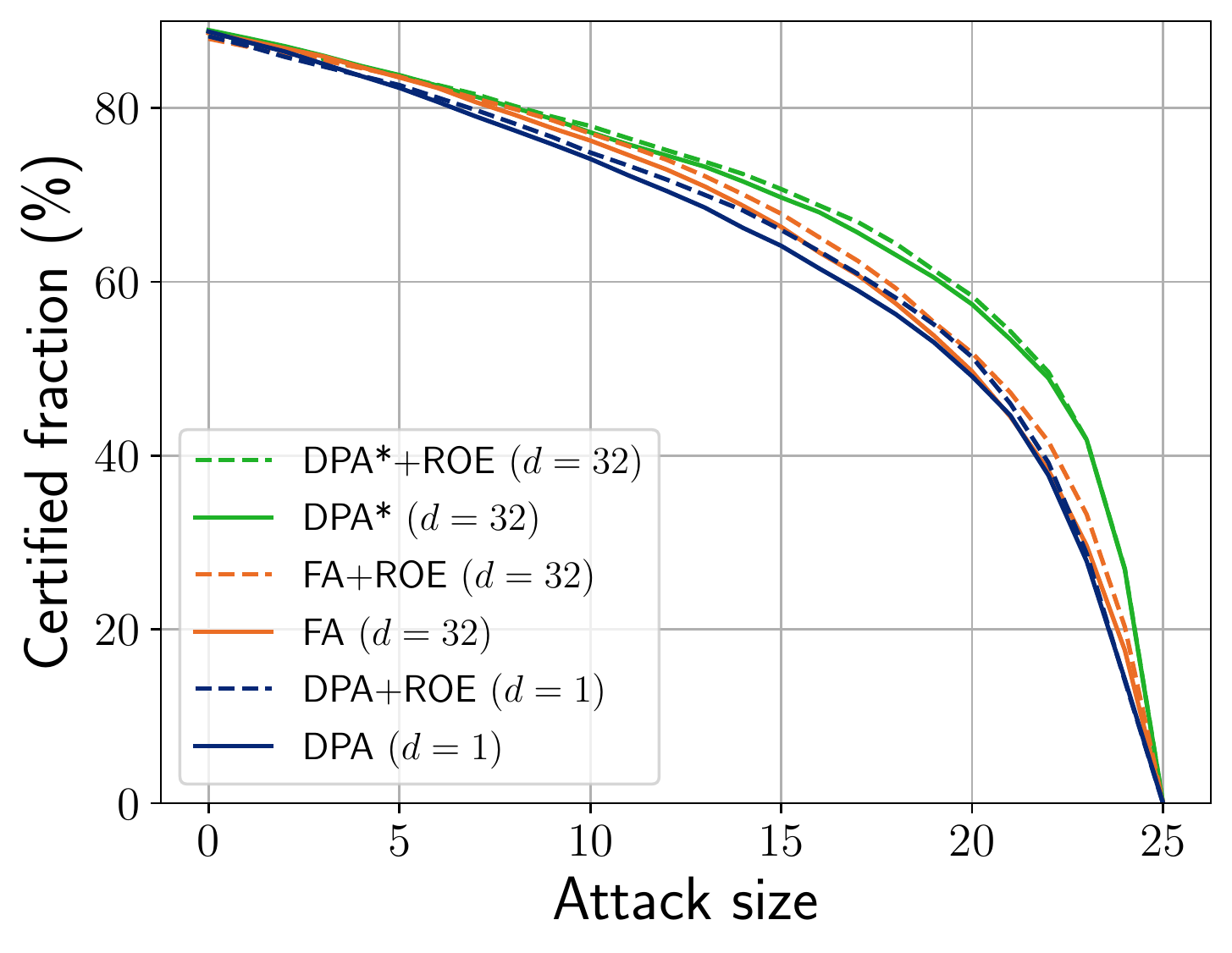}
    \end{minipage}
    \begin{minipage}{0.31\linewidth}
      \includegraphics[width=1\linewidth]{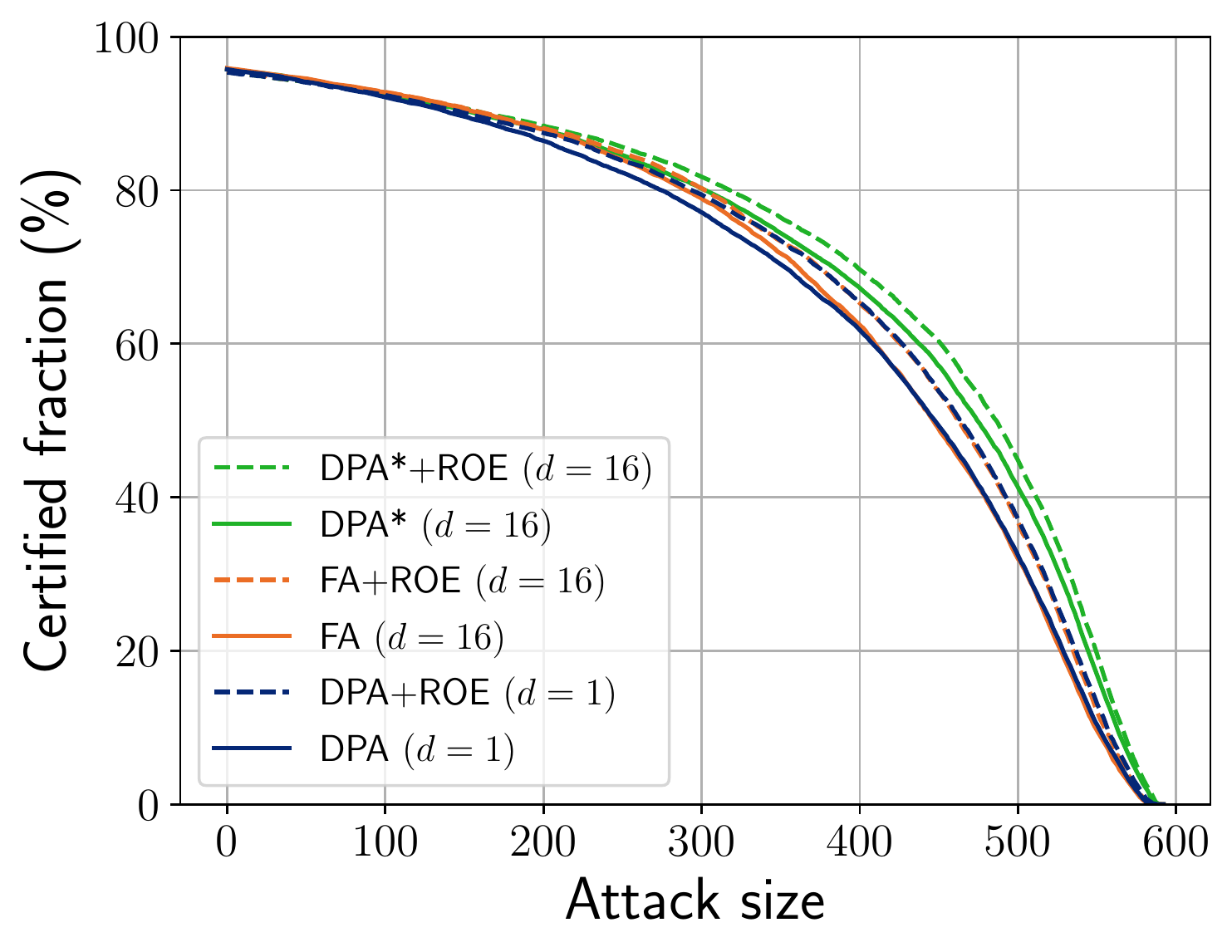}
    \end{minipage}
    \label{fig:certified_accuracy}
  \end{subfigure}\\
  \begin{subfigure}{1\linewidth}
    \centering
    \begin{minipage}{0.31\linewidth}
      \includegraphics[width=1\linewidth]{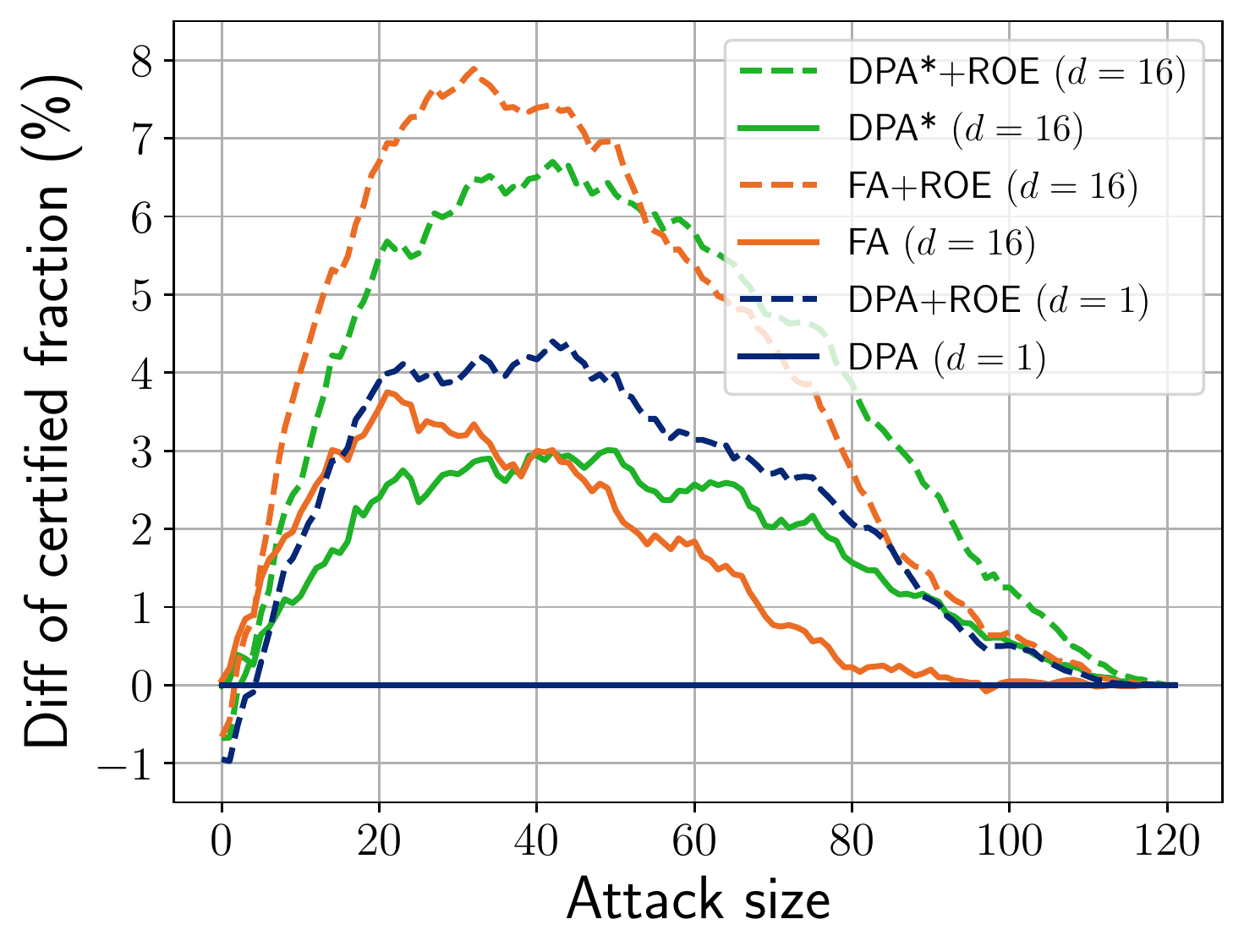}
    \end{minipage}
    \begin{minipage}{0.31\linewidth}
      \includegraphics[width=1\linewidth]{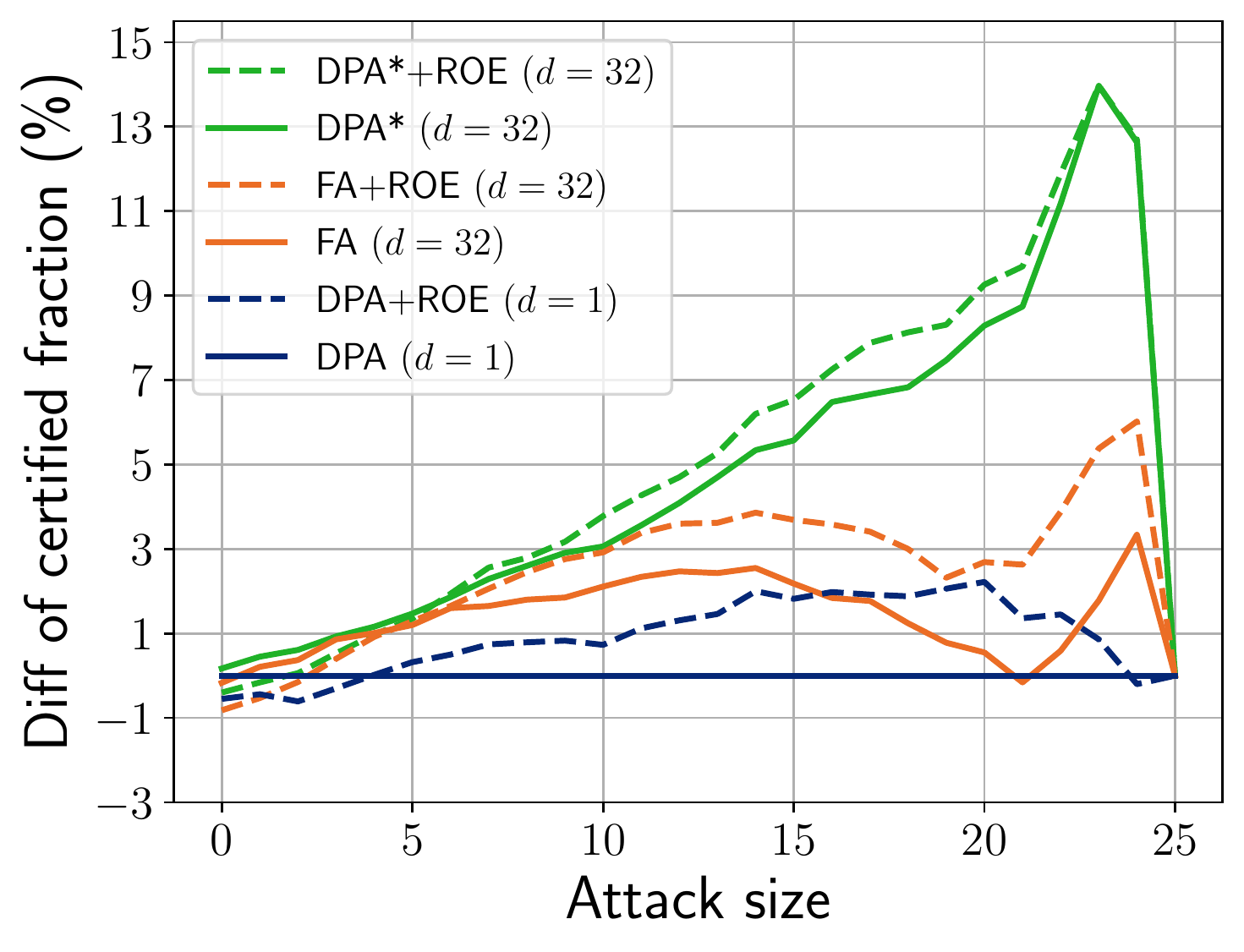}
    \end{minipage}
    \begin{minipage}{0.31\linewidth}
      \includegraphics[width=1\linewidth]{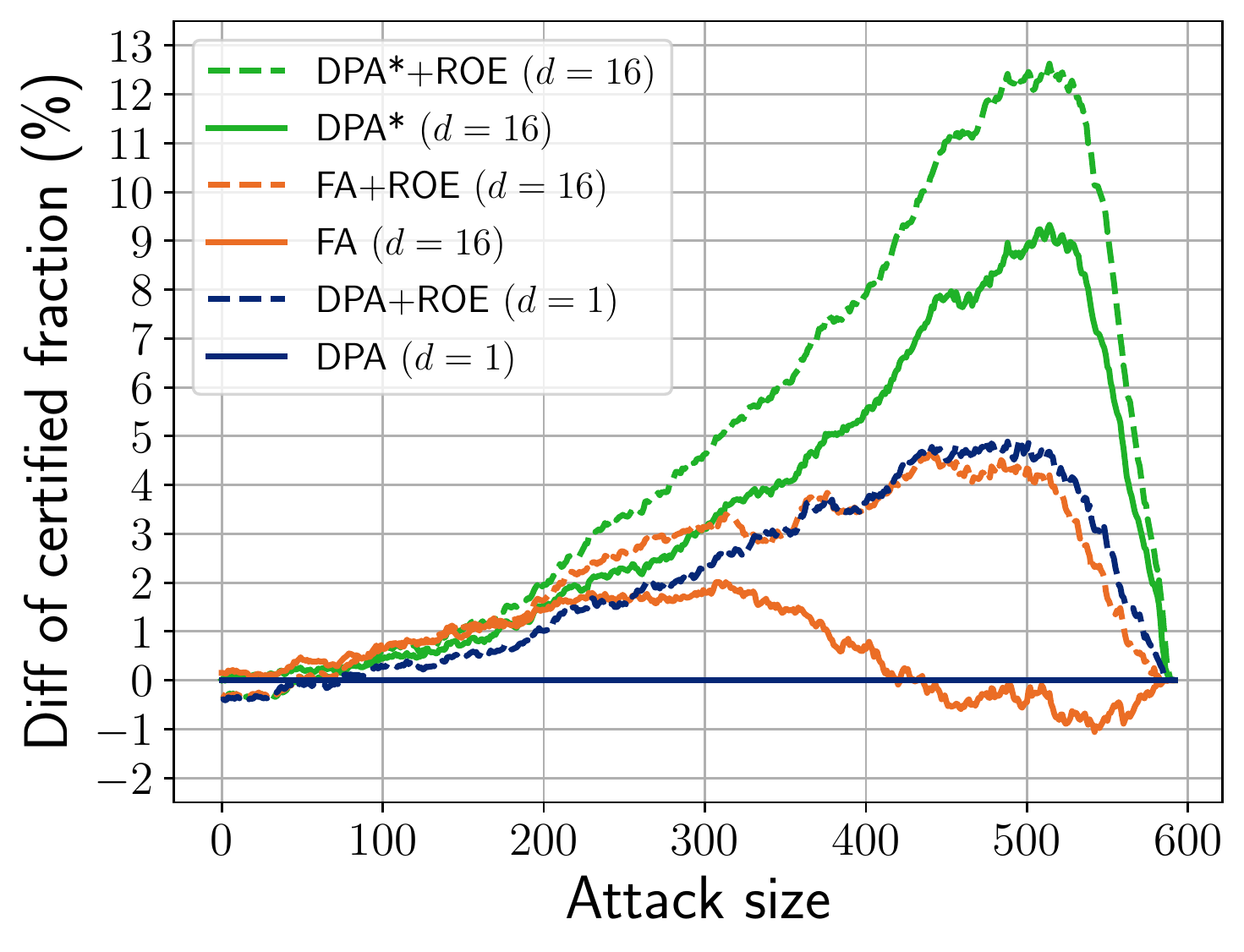}
    \end{minipage}
    \label{fig:improvement}
  \end{subfigure}\\
\caption{
\textbf{First row}:
The curves of certified fraction of different methods on different datasets.
\textbf{Second row}:
The improvements of certified fraction over DPA. 
Plots in the first columns refers to CIFAR-10 ($k=250$),
plots in the second column refers to GRSTB ($k=50$),
and plots in the last column corresponds to MNIST ($k=1200$).
Note that training cost of different methods scales up with $d$, i.e., 
\textbf{training of FA, \faroe{}, \dpastar{}, and \dpastarroe{} with parameter $d$} takes roughly $d$ \textbf{times more than that of DPA or \dparoe{}}.
When the adversary's budget is large, \textbf{\dparoe{} outperforms FA} while it significantly exploits less training cost.
In some case, \dparoe{} can outperform \dpastar{} as well.}
\label{fig:certified_fraction}
\end{figure*}

In this section, we empirically analyze our method and demonstrate that
it reaches state-of-the-art results in certified robustness. 
In some cases, this comes with considerably less computation than the baseline.

\subsection{Experimental setting}

We consider the same setup as prior work \cite{levine2020deep, wang2022improved} and
evaluate our method on MNIST \cite{lecun1998gradient},
CIFAR-10 \cite{krizhevsky2009learning} and GTSRB~\cite{stallkamp2012man} datasets.
We similarly use Network-In-Network \cite{lin2013network} architecture,
to be trained with the set of hyperparameters from \cite{gidaris2018unsupervised}.
\citet{wang2022lethal} observe that the accuracy of ensemble methods can be further improved by having better base classifiers, i.e., base classifiers that have better classification accuracy.
They improve over the original DPA by training base classifiers on the augmented version of datasets.
As we want to have a fair comparison to the FA baseline, we train classifiers of both DPA and FA as \citet{wang2022improved}.

As in prior work, we consider \emph{certified fraction} (CF) as our performance metric.
Given a budget $B$ for the adversary, the certified fraction of the dataset
denotes the fraction of test samples that are \emph{provably} classified correctly
as long as the dataset is not altered by more than $B$ points. 


As baselines, we use the Deep Partition Aggregation (DPA) method of \citet{levine2020deep} and
the Finite Aggregation (FA) method of \citet{wang2022improved}.
As discussed in \citet{wang2022improved}, FA is effectively a generalization of DPA that
uses overlapping partitions. Compared to DPA, FA takes an additional parameter $d$ 
and uses $d$ times as many base models.
When using $d=1$, FA coincides with DPA.
While larger values of $d$ increase the robustness of the model,
this comes at the cost of increased computation;
the training cost for FA is $d$ times the training cost for DPA.

\textbf{Boosted DPA.} Given the increased computational cost of FA, in order to obtain a fair comparison, we also consider a ``boosted'' variant of \dpa{} which we denote by \dpastar{}.\footnote{We thank an anonymous referee of the paper for proposing this method.} For each partition $D_i$ of the dataset, we train $d$ models $\{f_{i, j}\}_{j=1}^{d}$ on the partition using different seed values. At test time, we average the logits layer of these models. In other words, each model $f_i$ in \dpastar is itself an ensemble of $d$ ``submodels''. 
The approach 
effectively makes the predictions more robust to the noisiness of the training process.
The certificate for the model
is calculated using the same technique as the certificate for \dpa{} as outlined in Section \ref{sec:roe_dpa_cert}. We note that the case $d=1$ corresponds to \dpa{}.
\subsection{Results}
\looseness -1
Our results are shown in Tables~\ref{table:roe_fa_vs_fa}, \ref{table:roe_dpas_vs_dpas}, \ref{table:roe_dpa_vs_dpa}, \ref{table:roe_dpa_vs_fa}, \ref{table:roe_dpa_vs_dpa_star}
and Figures~\ref{fig:certified_fraction}, \ref{fig:app_certified_fraction}.
We do not report error bars as the variation caused by training noise is negligible (See Appendix~\ref{app:repeat}).
As seen in Table~\ref{table:roe_fa_vs_fa},
when we apply our aggregation method to FA,
it can remarkably improve the certified accuracy
of the original Finite Aggregation (we compare these two methods with the same $d$). 
Improvements can be up to $3\%$ or $4\%$.
More interestingly, by applying our technique to \dpastar{}, we observe significant improvement which can be up tp $27\%$ in some cases.
This implies that \dpastarroe{} is the new \textbf{state-of-the-art} in provable defense.

Overall, the results show that 
no matter choice of base classifiers is, \textbf{applying ROE improves the certified robustness}.
The results of applying \roe{} to \dpa{} can be seen in Table~\ref{table:roe_dpa_vs_dpa}.
We obtain improvements in certified accuracy by up to $4.73\%, 3.14\%$, and $3.18\%$ on MNIST, CIFAR-10, and GTSRB, respectively.
Improvements of using \roe{} on \dpastar{} is reported in Table~\ref{table:roe_dpas_vs_dpas}.
\dpastarroe{} improves robustness of \dpastar{} on MNIST, CIFAR-10, and GTSRB by up to $3.49\%, 3.70\%$, and $3.44\%$, respectively.

Perhaps more impressively, as seen in Table~\ref{table:roe_dpa_vs_fa},
\dparoe{} also competes with, and for larger values of $B$ outperforms FA
while it significantly needs less training cost as its training cost is equivalent to that of DPA.
For example, on CIFAR-10 dataset when $k=250$,
by using a single NVIDIA GeForce RTX 2080 Ti GPU, the
total training time of classifiers used in \dparoe{} is around $3$ hours while
it takes around $47.3$ hours to train classifiers needed in FA with $d=16$. 
Roughly speaking, the training of FA with parameter $d$, takes $d$ time more than that of DPA, or \dparoe{}.

Although \dparoe{} uses less training time, it obtains a~higher certified accuracy for larger values of $B$, e.g.,
on the standard CIFAR-10 dataset when $k=50$,
it obtains a higher certified fraction than FA with $d=32$ when $B \ge 15$
\textbf{even though it uses 32 times less computation in training}.
A similar comparison
of \dparoe{} and \dpastar{}
in Table~\ref{table:roe_dpa_vs_dpa_star} shows that in some cases, \dparoe{} can outperform \dpastar as well while it uses significantly less training cost.

The increased accuracy of the models depends on the setting; one may choose  \dparoe{}, \faroe{} or \dpastarroe{} as each have their advantages. \dparoe{} has the advantage that it uses less computational resources while \dpastarroe{} and \faroe{} obtain better accuracy. We note that the benefit of \dpastarroe{} and \faroe{} compared to \dparoe{} seem to come from two different sources; \dpastarroe{} uses increased computation to make each model more robust while \faroe{} uses a more complex partitioning scheme. As such, it can be expected that either of the methods may be preferable depending on the setting and
this is observed in our experiments, though 
generally \dpastarroe{} seems to have higher accuracy. Visual comparison of different methods can be seen in Figures~\ref{fig:certified_fraction} and \ref{fig:app_certified_fraction}. While these distinctions are interesting, they are orthogonal to the focus of our paper; we show that using \roe{} improves robustness in all of these settings without increasing the training cost.

\textbf{Effect of the budget $B$.}
Our results show that ROE methods are especially useful for larger values of 
the adversary's budget $B$.
Intuitively, \roe{} is utilizing base classifiers that were previously discarded. As such, for a fixed budget $B$, \emph{the ratio} of the poisoned
samples to the utilized models is considerably smaller for our method,
which allows us to obtain improved results.
We note that this is in strong contrast to FA, where for larger values of $B$,
the accuracy gains compared to DPA diminish and eventually cease to exist.
Indeed, as seen in Figure~\ref{fig:certified_fraction},
FA can actually be \emph{worse} than DPA for large budgets,
while our method remains strongly favorable, as we achieve 5\% higher certified fraction
on the standard CIFAR-10 dataset.

While our aggregation method performs well when the adversary's budget is high,
we see a slightly lower certified fraction
in
Figures~\ref{fig:certified_fraction} and \ref{fig:app_certified_fraction}
when $B$ is relatively small.
In these cases, the certified fraction is close to clean accuracy, i.e.,
accuracy of the model when training data is not poisoned. 
Intuitively, the drop is because of
the (slightly) lower reliability of the logits-layer information as discussed in Section \ref{sec:roe}.
\roe{} methods have slightly lower clean accuracy because they involve all models in prediction,
even models which are not accurate for a sample test.
On the other hand, when the model's prediction is correct,
involving more models makes the adversary's task harder.

\section{Conclusion}
\looseness -1
In this paper, we introduced
Run-Off Election (\roe{}), 
a new aggregation method for ensemble-based defenses against data poisoning.
We proposed a novel two-stage election
across the base models of the ensemble that utilizes all of the models in order to increase the prediction gap between the top and runner-up classes.
We developed a unified
framework for calculating
certificates for our method
and 
proposed three new defense methods -- \dparoe{}, \faroe{}, 
and \dpastarroe{} -- based
on prior work and a new \dpastar{} method proposed by an anonymous reviewer.
We evaluated our methods on standard
benchmarks
and
observed improved poisoning certificates while simultaneously reducing the training cost. Our method established a new state-of-the-art in provable defense against general data poisoning in several datasets. 

\looseness -1
For future work, it would be interesting to extend our methodology to other ensemble methods including the ones producing stochastic certificates. 
As discussed in Section~\ref{sec:method}, in principle, \roe{} can be applied on top of any ensemble method, though it is not immediately clear how one can obtain prediction certificates for such hybrid models.
We hope that our unified approach to calculating
certificates in Section \ref{sec:roe_dpa_cert},
can be helpful for other ensemble methods as well.

Another interesting direction is
to
explore more complex aggregation mechanisms, such as an $N$-round election for $N > 2$.
Two potential challenges for designing such methods are certificate calculation and potential decreased accuracy due to unreliable logits-layer information.
These challenges also appear in our work;
our method for calculating certificates is more involved than prior work and in some settings our models have lower clean accuracy.
We hope that the techniques proposed in our paper can help address these challenges for more complex mechanisms as well.

\section{Acknowledgements}
The authors thank Wenxiao Wang for helpful discussions throughout the project. 
This project was supported in part by NSF CAREER AWARD 1942230, a grant from NIST 60NANB20D134, HR001119S0026 (GARD), ONR YIP award N00014-22-1-2271, Army Grant No. W911NF2120076 and the NSF award CCF2212458.

\bibliography{references}
\bibliographystyle{icml2023}


\appendix
\onecolumn

\appendix
\onecolumn

\section{Code}
Our code can be found in this \href{https://github.com/k1rezaei/Run-Off-Election/tree/main}{github repository}.

\section{Figures and Tables}
In this section, we provide Tables~\ref{table:roe_dpa_vs_dpa}, \ref{table:roe_dpa_vs_fa}, and \ref{table:roe_dpa_vs_dpa_star}.
Figure~\ref{fig:app_certified_fraction} is also depicted here.


\begin{table*}[tb]
    \caption{
        Certified fraction of \dpastarroe{}{}, and \dpastar{} with various values for hyperparameter $d$, with respect to different attack sizes $B$.
        Improvements over the \dpastar{} baseline are highlighted in blue if they are positive and red otherwise.
    }
    \vskip 0.15in
    \label{table:roe_dpas_vs_dpas}
    \resizebox{\linewidth}{!}{
    \begin{tabular}{|c||c||c|c||c|c|c|c|c|}
    \hline
        dataset & $k$ & method & $d$ & \multicolumn{5}{|c|}{certified fraction}\\ 
        \hline\hline
        \multirow{5}{*}{MNIST}      & \multirow{3}{*}{1200}    & \multicolumn{2}{c||}{}                    & $B\leq 100$ & $B\leq 200$ & $B\leq 300$ & $B\leq 400$ & $B\leq 500$\\\cline{3-9}
                                    &                          & \footnotesize{\dpastar{}}             & \multirow{2}{*}{16}        & 92.55\% & 87.99\% & 80.23\% & 67.25\% & 41.32\%\\\cline{5-9}
                                    &                          & \footnotesize{\dpastarroe{}}       &         & 92.70\%{\color{blue}(+0.15\%)} & 88.41\%{\color{blue}(+0.42\%)} & 81.75\%{\color{blue}(+1.52\%)} & 69.67\%{\color{blue}(+2.42\%)} & 44.81\%{\color{blue}(+3.49\%)} \\\cline{3-9}
        \hline\hline
        \multirow{10}{*}{CIFAR-10}  & \multirow{5}{*}{50}      & \multicolumn{2}{c||}{}                    & $B\leq 5$ & $B\leq 10$ & $B\leq 15$ & $B\leq 18$ & $B\leq 20$\\\cline{3-9}
                                    &                          & \footnotesize{\dpastar{}}             & \multirow{2}{*}{16}        & 61.00\% & 50.94\% & 39.29\% & 31.41\% & 25.97\%\\\cline{5-9}
                                    &                          & \footnotesize{\dpastarroe{}}       &         & 61.87\%{\color{blue}(+0.87\%)} & 52.71\%{\color{blue}(+1.77\%)} & 41.51\%{\color{blue}(+2.22\%)} & 33.42\%{\color{blue}(+2.01\%)} & 27.47\%{\color{blue}(+1.5\%)}\\\cline{3-9}
                                    &                          & \footnotesize{\dpastar{}}             & \multirow{2}{*}{32}        & 65.77\% & 60.89\% & 55.53\% & 51.05\% & 46.95\%\\\cline{5-9}
                                    &                          & \footnotesize{\dpastarroe{}}       &         & 65.99\%{\color{blue}(+0.22\%)} & 61.51\%{\color{blue}(+0.62\%)} & 56.13\%{\color{blue}(+0.6\%)} & 51.83\%{\color{blue}(+0.78\%)} & 47.61\%{\color{blue}(+0.66\%)}\\\cline{3-9}
                                    & \multirow{5}{*}{250}     & \multicolumn{2}{c||}{}                    & $B\leq 10$ & $B\leq 20$ & $B\leq 40$ & $B\leq 50$ & $B\leq 60$\\\cline{3-9}
                                    &                          & \footnotesize{\dpastar{}}             & \multirow{2}{*}{8}         & 45.36\% & 36.34\% & 21.71\% & 16.41\% & 11.60\%\\\cline{5-9}
                                    &                          & \footnotesize{\dpastarroe{}}       &          & 47.14\%{\color{blue}(+1.78\%)} & 39.32\%{\color{blue}(+2.98\%)} & 25.41\%{\color{blue}(+3.7\%)} & 19.68\%{\color{blue}(+3.27\%)} & 15.02\%{\color{blue}(+3.42\%)}\\\cline{3-9}
                                    &                          & \footnotesize{\dpastar{ }}             & \multirow{2}{*}{16}        & 45.45\% & 36.41\% & 21.93\% & 16.55\% & 11.82\%\\\cline{5-9}
                                    &                          & \footnotesize{\dpastarroe{}}       &         & 46.88\%{\color{blue}(+1.43\%)} & 39.50\%{\color{blue}(+3.09\%)} & 25.49\%{\color{blue}(+3.56\%)} & 19.83\%{\color{blue}(+3.28\%)} & 15.04\%{\color{blue}(+3.22\%)}\\\cline{3-9}
        \hline\hline
        \multirow{10}{*}{GTSRB}     & \multirow{5}{*}{50}      & \multicolumn{2}{c||}{}                    & $B\leq 5$ & $B\leq 10$ & $B\leq 15$ & $B\leq 20$ & $B\leq 22$\\\cline{3-9}
                                    &                          & \footnotesize{\dpastar{}}             & \multirow{2}{*}{16}        & 83.71\% & 77.22\% & 69.70\% & 56.71\% & 48.61\%\\\cline{5-9}
                                    &                          & \footnotesize{\dpastarroe{}}       &         & 83.67\%{\color{red}(--0.04\%)} & 77.84\%{\color{blue}(+0.62\%)} & 70.63\%{\color{blue}(+0.93\%)} & 57.97\%{\color{blue}(+1.26\%)} & 49.33\%{\color{blue}(+0.72\%)}\\\cline{3-9}
                                    
                                    &                          & \footnotesize{\dpastar{}}             & \multirow{2}{*}{32}        & 83.79\% & 77.21\% & 69.71\% & 57.41\% & 48.91\%\\\cline{5-9}
                                    &                          & \footnotesize{\dpastarroe{}}       &         & 83.67\%{\color{red}(--0.13\%)} & 77.93\%{\color{blue}(+0.71\%)} & 70.67\%{\color{blue}(+0.97\%)} & 58.38\%{\color{blue}(+0.97\%)} & 49.61\%{\color{blue}(+0.7\%)}\\\cline{2-9}
                                    & \multirow{5}{*}{100}     & \multicolumn{2}{c||}{}                    & $B\leq 5$ & $B\leq 15$ & $B\leq 20$ & $B\leq 25$ & $B\leq 30$\\\cline{3-9}
                                    &                          & \footnotesize{\dpastar{}}             & \multirow{2}{*}{16}        & 47.64\% & 34.73\% & 27.50\% & 21.20\% & 16.19\%\\\cline{5-9}
                                    &                          & \footnotesize{\dpastarroe{}}       &         & 47.50\%{\color{red}(--0.14\%)} & 36.48\%{\color{blue}(+1.74\%)} & 30.60\%{\color{blue}(+3.1\%)} & 24.63\%{\color{blue}(+3.44\%)} & 19.26\%{\color{blue}(+3.07\%)}\\\cline{3-9}
                                    &                          & \footnotesize{\dpastar{}}             & \multirow{2}{*}{32}        & 47.82\% & 34.81\% & 27.90\% & 21.51\% & 16.34\%\\\cline{5-9}
                                    &                          & \footnotesize{\dpastarroe{}}       &         & 47.66\%{\color{red}(--0.17\%)} & 36.66\%{\color{blue}(+1.84\%)} & 30.70\%{\color{blue}(+2.8\%)} & 24.71\%{\color{blue}(+3.2\%)} & 19.33\%{\color{blue}(+2.99\%)}\\\cline{3-9}                                
        \hline
    \end{tabular}}
\end{table*}

\begin{table}[tb]
    \caption{
        Certified fraction of \dparoe{}, and original DPA with respect to different attack sizes $B$.
        Improvements over the DPA baseline are highlighted in blue if they are positive and red otherwise.
    }
    \vskip 0.15in
    \label{table:roe_dpa_vs_dpa}
    \resizebox{\linewidth}{!}{
    \begin{tabular}{|c||c||c||c|c|c|c|c|}
    \hline
        dataset & $k$ & method & \multicolumn{5}{|c|}{certified fraction}\\ 
        \hline\hline
        \multirow{3}{*}{MNIST}      & \multirow{3}{*}{1200}    &                                      & $B\leq 100$ & $B\leq 200$ & $B\leq 300$ & $B\leq 400$ & $B\leq 500$\\\cline{3-8}
                                    &                          & \footnotesize{DPA}                   & 92.11\% & 86.45\% & 77.12\% & 61.78\% & 32.42\%\\\cline{3-8}
                                    &                          & \footnotesize{\dparoe{}}             & 92.38\%{\color{blue}(+0.27\%)} & 87.46\%{\color{blue}(+1.01\%)} & 79.43\%{\color{blue}(+2.31\%)} & 65.42\%{\color{blue}(+3.64\%)} & 37.15\%{\color{blue}(+4.73\%)}\\\cline{2-8}
        \hline\hline
        \multirow{6}{*}{CIFAR-10}   & \multirow{3}{*}{50}      &                                      & $B\leq 5$ & $B\leq 10$ & $B\leq 15$ & $B\leq 18$ & $B\leq 20$\\\cline{3-8}
                                    &                          & \footnotesize{DPA}                   & 58.07\% & 46.44\% & 33.46\% & 24.87\% & 19.36\% \\\cline{3-8}
                                    &                          & \footnotesize{\dparoe{}}             & 59.80\%{\color{blue}(+1.73\%)} & 49.09\%{\color{blue}(+2.65\%)} & 36.04\%{\color{blue}(+2.58\%)} & 27.52\%{\color{blue}(+2.65\%)} & 21.30\%{\color{blue}(+1.94\%)}\\\cline{2-8}
                                    & \multirow{3}{*}{250}     &                                      & $B\leq 10$ & $B\leq 20$ & $B\leq 40$ & $B\leq 50$ & $B\leq 60$\\\cline{3-8}
                                    &                          & \footnotesize{DPA}                   & 44.31\% & 34.01\% & 18.99\% & 13.55\% & 9.25\%\\\cline{3-8}
                                    &                          & \footnotesize{\dparoe{}}             & 46.14\%{\color{blue}(+1.83\%)} & 37.90\%{\color{blue}(+3.89\%)} & 23.16\%{\color{blue}(+4.17\%)} & 17.53\%{\color{blue}(+3.98\%)} & 12.39\%{\color{blue}(+3.14\%)}\\\cline{2-8}
        \hline\hline
        \multirow{6}{*}{GTSRB}      & \multirow{3}{*}{50}      &                                      & $B\leq 5$ & $B\leq 10$ & $B\leq 15$ & $B\leq 20$ & $B\leq 22$\\\cline{3-8}
                                    &                          & \footnotesize{DPA}                   & 82.32\% & 74.15\% & 64.14\% & 49.12\% & 37.73\% \\\cline{3-8}
                                    &                          & \footnotesize{\dparoe{}}             & 82.64\%{\color{blue}(+0.32\%)} & 74.88\%{\color{blue}(+0.73\%)} & 65.96\%{\color{blue}(+1.82\%)} & 51.34\%{\color{blue}(+2.22\%)} & 39.18\%{\color{blue}(+1.45\%)}\\\cline{2-8}
                                    & \multirow{3}{*}{100}     &                                      & $B\leq 5$ & $B\leq 15$ & $B\leq 20$ & $B\leq 25$ & $B\leq 30$\\\cline{3-8}
                                    &                          & \footnotesize{DPA}                   & 46.16\% & 30.19\% & 22.84\% & 17.16\% & 12.75\%\\\cline{3-8}
                                    &                          & \footnotesize{\dparoe{}}             & 46.09\%{\color{red}(--0.07\%)} & 33.45\%{\color{blue}(+3.26\%)} & 26.86\%{\color{blue}(+4.02\%)} & 21.02\%{\color{blue}(+3.86\%)} & 15.93\%{\color{blue}(+3.18\%)}\\\cline{2-8}
        \hline
    \end{tabular}}
\end{table}
\begin{table}[H]
    \caption{
        Certified fraction of \dparoe{}, and original FA with various values of hyperparameter $d$
        with respect to different attack sizes $B$.
        Improvements of \dparoe{} compared to the original FA with different values of $d$ are highlighted in blue if they are positive and red otherwise.
        \textbf{Note that FA with parameter $d$ uses $d$ times as many as classifiers than \dparoe{}. Training FA classifiers therefore takes $d$ times more that that of \dparoe{}.}
    }
    \vskip 0.15in
    \label{table:roe_dpa_vs_fa}
    \resizebox{\linewidth}{!}{ 
    \begin{tabular}{|c||c||c|c||c|c|c|c|c|}
    \hline
        dataset & $k$ & method & $d$ & \multicolumn{5}{|c|}{certified fraction}\\ 
        \hline\hline
        \multirow{5}{*}{MNIST}      & \multirow{5}{*}{1200}    & \multicolumn{2}{c||}{}                            & $B\leq 100$ & $B\leq 200$ & $B\leq 300$ & $B\leq 400$ & $B\leq 500$\\\cline{3-9}
                                    &                          & \footnotesize{FA}              & 16               & 92.75\% & 87.89\% & 78.91\% & 62.42\% & 31.97\%\\\cline{3-9}
                                    &                          & \multicolumn{2}{c||}{\footnotesize{\dparoe{}}}    & 92.38\%{\color{red}(--0.37\%)} & 87.46\%{\color{red}(--0.43\%)} & 79.43\%{\color{blue}(+0.52\%)} & 65.42\%{\color{blue}(+3.00\%)} & 37.15\%{\color{blue}(+5.18\%)} \\\cline{3-9}
                                    &                          & \footnotesize{FA}              & 32               & 92.97\% & 88.49\% & 80.17\% & 64.34\% & 31.09\%\\\cline{3-9}
                                    &                          & \multicolumn{2}{c||}{\footnotesize{\dparoe{}}}    & 92.38\%{\color{red}(--0.59\%)} & 87.46\%{\color{red}(--1.03\%)} & 79.43\%{\color{red}(--0.74\%)} & 65.42\%{\color{blue}(+1.08\%)} & 37.15\%{\color{blue}(+6.06\%)}\\\cline{3-9}
        \hline\hline
        \multirow{10}{*}{CIFAR-10}   & \multirow{5}{*}{50}     & \multicolumn{2}{c||}{}                            & $B\leq 5$ & $B\leq 10$ & $B\leq 15$ & $B\leq 18$ & $B\leq 20$\\\cline{3-9}
                                    &                          & \footnotesize{FA}             & 16                & 60.55\% & 48.85\% & 34.61\% & 25.46\% & 19.90\%\\\cline{3-9}
                                    &                          & \multicolumn{2}{c||}{\footnotesize{\dparoe{}}}    & 59.80\%{\color{red}(--0.75\%)} & 49.09\%{\color{blue}(+0.24\%)} & 36.04\%{\color{blue}(+1.43\%)} & 27.52\%{\color{blue}(+2.06\%)} & 21.30\%{\color{blue}(+1.40\%)}\\\cline{3-9}
                                    &                          &\footnotesize{FA}             & 32                 & 61.31\% & 50.31\% & 36.03\% & 26.55\% & 19.93\%\\\cline{3-9}
                                    &                          & \multicolumn{2}{c||}{\footnotesize{\dparoe{}}}    & 59.80\%{\color{red}(--1.51\%)} & 49.09\%{\color{red}(--1.22\%)} & 36.04\%{\color{blue}(+0.01\%)} & 27.52\%{\color{blue}(+0.97\%)} & 21.30\%{\color{blue}(+1.37\%)}\\\cline{2-9}
                                    & \multirow{5}{*}{250}     & \multicolumn{2}{c||}{}                            & $B\leq 10$ & $B\leq 20$ & $B\leq 40$ & $B\leq 50$ & $B\leq 60$\\\cline{3-9}
                                    &                          & \footnotesize{FA}             & 8                 & 45.38\% & 36.05\% & 20.08\% & 14.39\% & 9.70\%\\\cline{3-9}
                                    &                          & \multicolumn{2}{c||}{\footnotesize{\dparoe{}}}    & 46.14\%{\color{blue}(+0.76\%)} & 37.90\%{\color{blue}(+1.85\%)} & 23.16\%{\color{blue}(+3.08\%)} & 17.53\%{\color{blue}(+3.14\%)} & 12.39\%{\color{blue}(+2.69\%)}\\\cline{3-9}
                                    &                          & \footnotesize{FA}             & 16                & 46.52\% & 37.56\% & 21.99\% & 15.79\% & 11.09\%\\\cline{3-9}
                                    &                          & \multicolumn{2}{c||}{\footnotesize{\dparoe{}}}    & 46.14\%{\color{red}(--0.38\%)} & 37.90\%{\color{blue}(+0.34\%)} & 23.16\%{\color{blue}(+1.17\%)} & 17.53\%{\color{blue}(+1.74\%)} & 12.39\%{\color{blue}(+1.30\%)}\\\cline{3-9}
        \hline\hline
        \multirow{10}{*}{GTSRB}     & \multirow{5}{*}{50}      & \multicolumn{2}{c||}{}                            & $B\leq 5$ & $B\leq 10$ & $B\leq 15$ & $B\leq 20$ & $B\leq 22$\\\cline{3-9}
                                    &                          & \footnotesize{FA}             & 16                & 82.71\% & 74.66\% & 63.77\% & 47.52\% & 35.54\%\\\cline{3-9}
                                    &                          & \multicolumn{2}{c||}{\footnotesize{\dparoe{}}}    & 82.64\%{\color{red}(--0.07\%)} & 74.88\%{\color{blue}(+0.22\%)} & 65.96\%{\color{blue}(+2.19\%)} & 51.34\%{\color{blue}(+3.82\%)} & 39.18\%{\color{blue}(+3.63\%)}\\\cline{3-9}
                                    &                          & \footnotesize{FA}             & 32                & 83.52\% & 76.26\% & 66.32\% & 49.68\% & 38.31\%\\\cline{3-9}
                                    &                          & \multicolumn{2}{c||}{\footnotesize{\dparoe{}}}    & 82.64\%{\color{red}(--0.88\%)} & 74.88\%{\color{red}(--1.39\%)} & 65.96\%{\color{red}(--0.36\%)} & 51.34\%{\color{blue}(+1.66\%)} & 39.18\%{\color{blue}(+0.86\%)}\\\cline{2-9}
                                    & \multirow{5}{*}{100}     & \multicolumn{2}{c||}{}                            & $B\leq 5$ & $B\leq 15$ & $B\leq 20$ & $B\leq 25$ & $B\leq 30$\\\cline{3-9}
                                    &                          & \footnotesize{FA}             & 16                & 48.19\% & 33.95\% & 25.96\% & 18.92\% & 13.82\%\\\cline{3-9}
                                    &                          & \multicolumn{2}{c||}{\footnotesize{\dparoe{}}}    & 46.09\%{\color{red}(--2.10\%)} & 33.45\%{\color{red}(--0.50\%)} & 26.86\%{\color{blue}(+0.90\%)} & 21.02\%{\color{blue}(+2.10\%)} & 15.93\%{\color{blue}(+2.11\%)}\\\cline{3-9}
                                    &                          & \footnotesize{FA}             & 32                & 48.39\% & 34.96\% & 27.05\% & 19.83\% & 14.47\%\\\cline{3-9}
                                    &                          & \multicolumn{2}{c||}{\footnotesize{\dparoe{}}}    & 46.09\%{\color{red}(--2.30\%)} & 33.45\%{\color{red}(--1.51\%)} & 26.86\%{\color{red}(--0.18\%)} & 21.02\%{\color{blue}(+1.19\%)} & 15.93\%{\color{blue}(+1.46\%)}\\\cline{3-9}
        \hline        
    \end{tabular}}
\end{table}

\begin{table}[H]
    \caption{
        Certified fraction of \dparoe{}, and \dpastar{} with various values of hyperparameter $d$
        with respect to different attack sizes $B$.
        Improvements of \dparoe{} compared to the \dpastar{} with different values of $d$ are highlighted in blue if they are positive and red otherwise.
        \textbf{Note that \dpastar{} with parameter $d$ uses $d$ times as many as classifiers than \dparoe{}. Training \dpastar{} classifiers therefore takes $d$ times more that that of \dparoe{}.}
    }
    \vskip 0.15in
    \label{table:roe_dpa_vs_dpa_star}
    \resizebox{\linewidth}{!}{ 
    \begin{tabular}{|c||c||c|c||c|c|c|c|c|}
    \hline
        dataset & $k$ & method & $d$ & \multicolumn{5}{|c|}{certified fraction}\\ 
        \hline\hline
        \multirow{5}{*}{MNIST}      & \multirow{3}{*}{1200}    & \multicolumn{2}{c||}{}                            & $B\leq 100$ & $B\leq 200$ & $B\leq 300$ & $B\leq 400$ & $B\leq 500$\\\cline{3-9}
                                    &                          & \footnotesize{\dpastar{}}              & 16               & 92.55\% & 87.99\% & 80.23\% & 67.25\% & 41.32\%\\\cline{3-9}
                                    &                          & \multicolumn{2}{c||}{\footnotesize{\dparoe{}}}    & 92.38\%{\color{red}(--0.17\%)} & 87.46\%{\color{red}(--0.53\%)} & 79.43\%{\color{red}(--0.80\%)} & 65.42\%{\color{red}(-1.83\%)} & 37.15\%{\color{red}(--4.17\%)} \\\cline{3-9}
        \hline\hline
        \multirow{10}{*}{CIFAR-10}   & \multirow{5}{*}{50}     & \multicolumn{2}{c||}{}                            & $B\leq 5$ & $B\leq 10$ & $B\leq 15$ & $B\leq 18$ & $B\leq 20$\\\cline{3-9}
                                    &                          & \footnotesize{\dpastar{}}             & 16                & 61.00\% & 50.94\% & 39.29\% & 31.41\% & 25.97\%\\\cline{3-9}
                                    &                          & \multicolumn{2}{c||}{\footnotesize{\dparoe{}}}    & 59.80\%{\color{red}(--1.20\%)} & 49.09\%{\color{red}(--1.85\%)} & 36.04\%{\color{red}(--3.25\%)} & 27.52\%{\color{red}(--3.89\%)} & 21.30\%{\color{red}(--4.67\%)}\\\cline{3-9}
                                    &                          &\footnotesize{\dpastar{}}             & 32                 & 65.77\% & 60.89\% & 55.53\% & 51.05\% & 46.95\%\\\cline{3-9}
                                    &                          & \multicolumn{2}{c||}{\footnotesize{\dparoe{}}}    & 59.80\%{\color{red}(--5.97\%)} & 49.09\%{\color{red}(--11.80\%)} & 36.04\%{\color{red}(--19.49\%)} & 27.52\%{\color{red}(--23.53\%)} & 21.30\%{\color{red}(--25.65\%)}\\\cline{2-9}
                                    & \multirow{5}{*}{250}     & \multicolumn{2}{c||}{}                            & $B\leq 10$ & $B\leq 20$ & $B\leq 40$ & $B\leq 50$ & $B\leq 60$\\\cline{3-9}
                                    &                          & \footnotesize{\dpastar{}}             & 8                 & 45.36\% & 36.34\% & 21.71\% & 16.41\% & 11.60\% \\\cline{3-9}
                                    &                          & \multicolumn{2}{c||}{\footnotesize{\dparoe{}}}    & 46.14\%{\color{blue}(+0.78\%)} & 37.90\%{\color{blue}(+1.56\%)} & 23.16\%{\color{blue}(+1.45\%)} & 17.53\%{\color{blue}(+1.12\%)} & 12.39\%{\color{blue}(+0.79\%)}\\\cline{3-9}
                                    &                          & \footnotesize{\dpastar{}}             & 16                & 45.45\% & 36.41\% & 21.93\% & 16.55\% & 11.82\%\\\cline{3-9}
                                    &                          & \multicolumn{2}{c||}{\footnotesize{\dparoe{}}}    & 46.14\%{\color{blue}(+0.69\%)} & 37.90\%{\color{blue}(+1.49\%)} & 23.16\%{\color{blue}(+1.23\%)} & 17.53\%{\color{blue}(+0.98\%)} & 12.39\%{\color{blue}(+0.57\%)}\\\cline{3-9}
        \hline\hline
        \multirow{10}{*}{GTSRB}     & \multirow{5}{*}{50}      & \multicolumn{2}{c||}{}                            & $B\leq 5$ & $B\leq 10$ & $B\leq 15$ & $B\leq 20$ & $B\leq 22$\\\cline{3-9}
                                    &                          & \footnotesize{\dpastar{}}             & 16                & 83.71\% & 77.22\% & 69.70\% & 56.71\% & 48.61\%\\\cline{3-9}
                                    &                          & \multicolumn{2}{c||}{\footnotesize{\dparoe{}}}    & 82.64\%{\color{red}(--1.07\%)} & 74.88\%{\color{red}(--2.34\%)} & 65.96\%{\color{red}(--3.74\%)} & 51.34\%{\color{red}(--5.38\%)} & 39.18\%{\color{red}(--9.44\%)}\\\cline{3-9}
                                    &                          & \footnotesize{\dpastar{}}             & 32                & 83.79\% & 77.21\% & 69.71\% & 57.41\% & 48.91\%\\\cline{3-9}
                                    &                          & \multicolumn{2}{c||}{\footnotesize{\dparoe{}}}    & 82.64\%{\color{red}(--1.16\%)} & 74.88\%{\color{red}(--2.34\%)} & 65.96\%{\color{red}(--3.75\%)} & 51.34\%{\color{red}(--6.07\%)} & 39.18\%{\color{red}(--9.73\%)}\\\cline{2-9}
                                    & \multirow{5}{*}{100}     & \multicolumn{2}{c||}{}                            & $B\leq 5$ & $B\leq 15$ & $B\leq 20$ & $B\leq 25$ & $B\leq 30$\\\cline{3-9}
                                    &                          & \footnotesize{\dpastar{}}             & 16                & 47.64\% & 34.73\% & 27.50\% & 21.20\% & 16.19\%\\\cline{3-9}
                                    &                          & \multicolumn{2}{c||}{\footnotesize{\dparoe{}}}    & 46.09\%{\color{red}(--1.55\%)} & 33.45\%{\color{red}(--1.28\%)} & 26.86\%{\color{red}(--0.63\%)} & 21.02\%{\color{red}(--0.17\%)} & 15.93\%{\color{red}(--0.26\%)}\\\cline{3-9}
                                    &                          & \footnotesize{\dpastar{}}             & 32                & 47.82\% & 34.81\% & 27.90\% & 21.51\% & 16.34\%\\\cline{3-9}
                                    &                          & \multicolumn{2}{c||}{\footnotesize{\dparoe{}}}    & 46.09\%{\color{red}(--1.73\%)} & 33.45\%{\color{red}(--1.36\%)} & 26.86\%{\color{red}(--1.04\%)} & 21.02\%{\color{red}(--0.49\%)} & 15.93\%{\color{red}(--0.41\%)}\\\cline{3-9}
        \hline        
    \end{tabular}}
\end{table}

\begin{figure*}[tbp]
  \centering
  \begin{subfigure}{1\linewidth}
    \centering
    \begin{minipage}{0.31\linewidth}
      \includegraphics[width=1\linewidth]{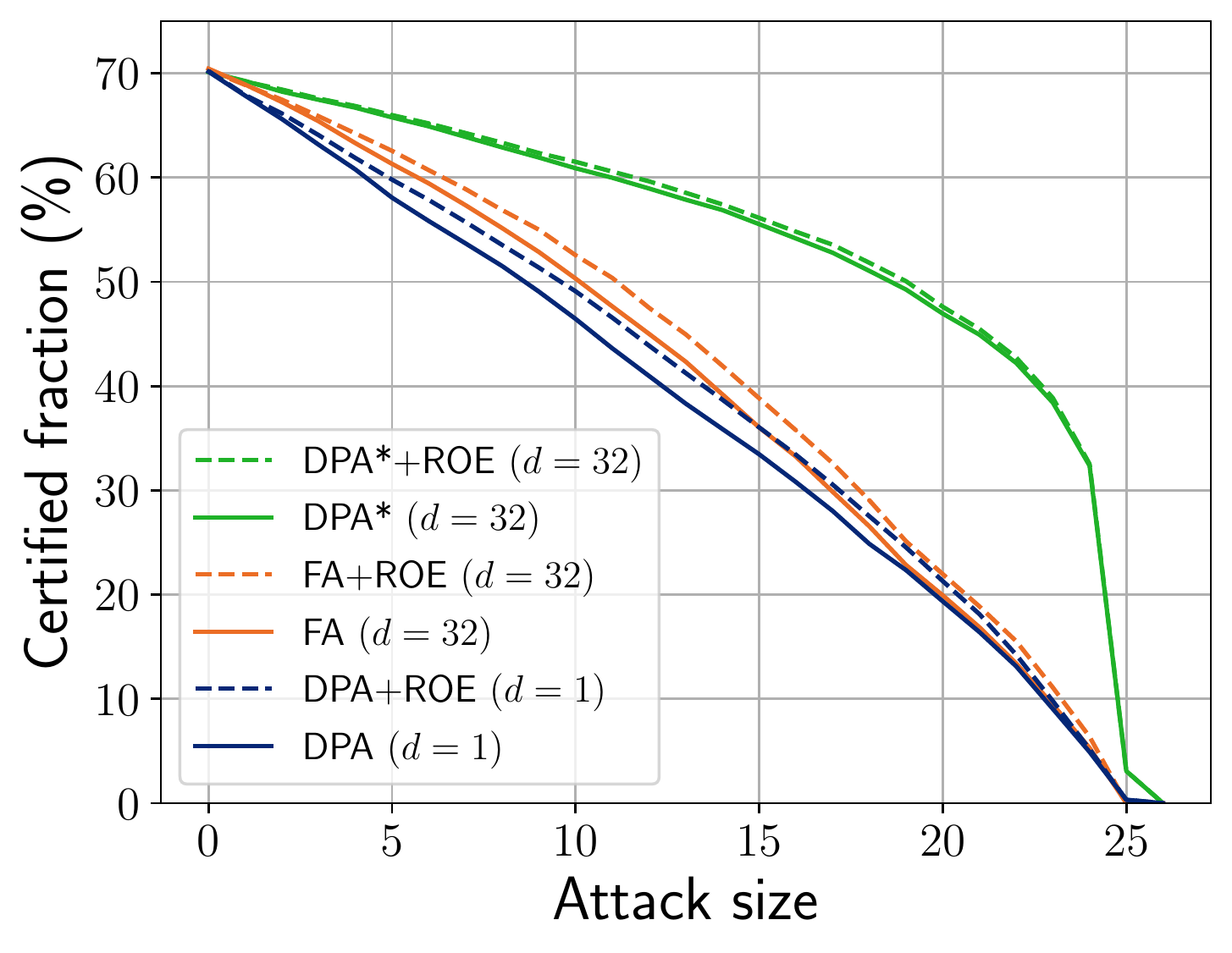}
    \end{minipage} \hspace{30pt}
    \begin{minipage}{0.31\linewidth}
      \includegraphics[width=1\linewidth]{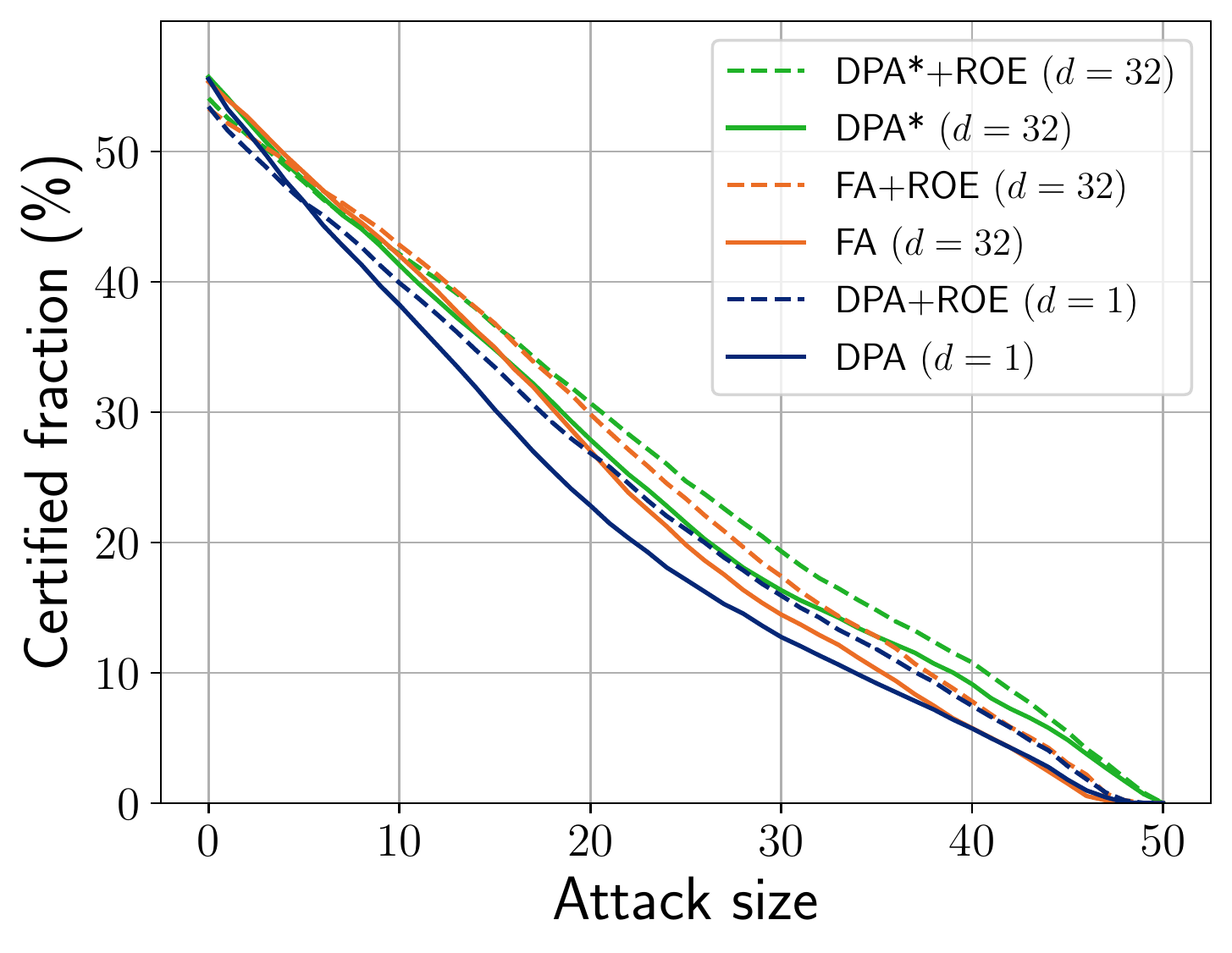}
    \end{minipage}
    \label{fig:app_certified_accuracy}
  \end{subfigure}\\
  \begin{subfigure}{1\linewidth}
    \centering
    \begin{minipage}{0.31\linewidth}
      \includegraphics[width=1\linewidth]{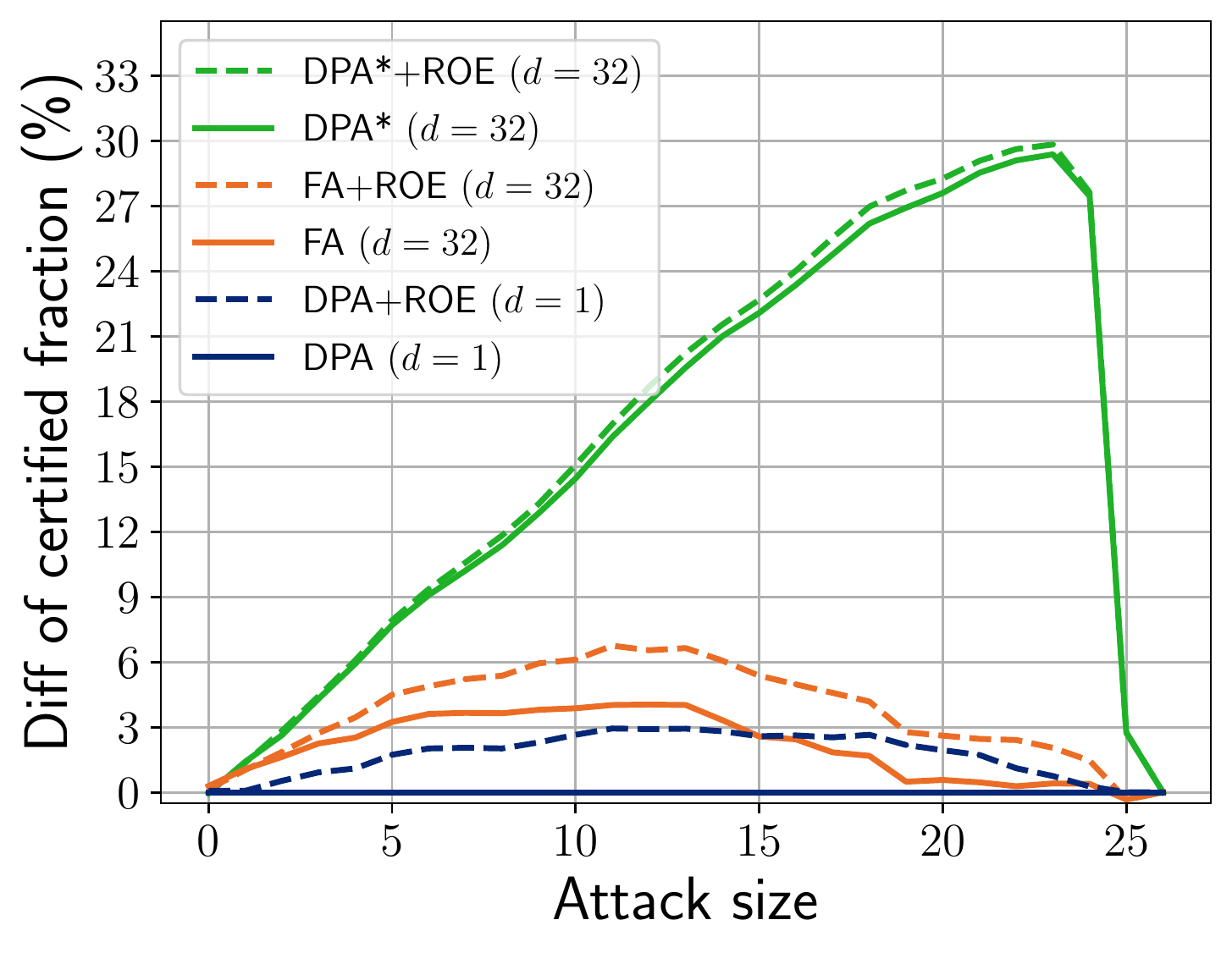}
    \end{minipage} \hspace{30pt}
    \begin{minipage}{0.31\linewidth}
      \includegraphics[width=1\linewidth]{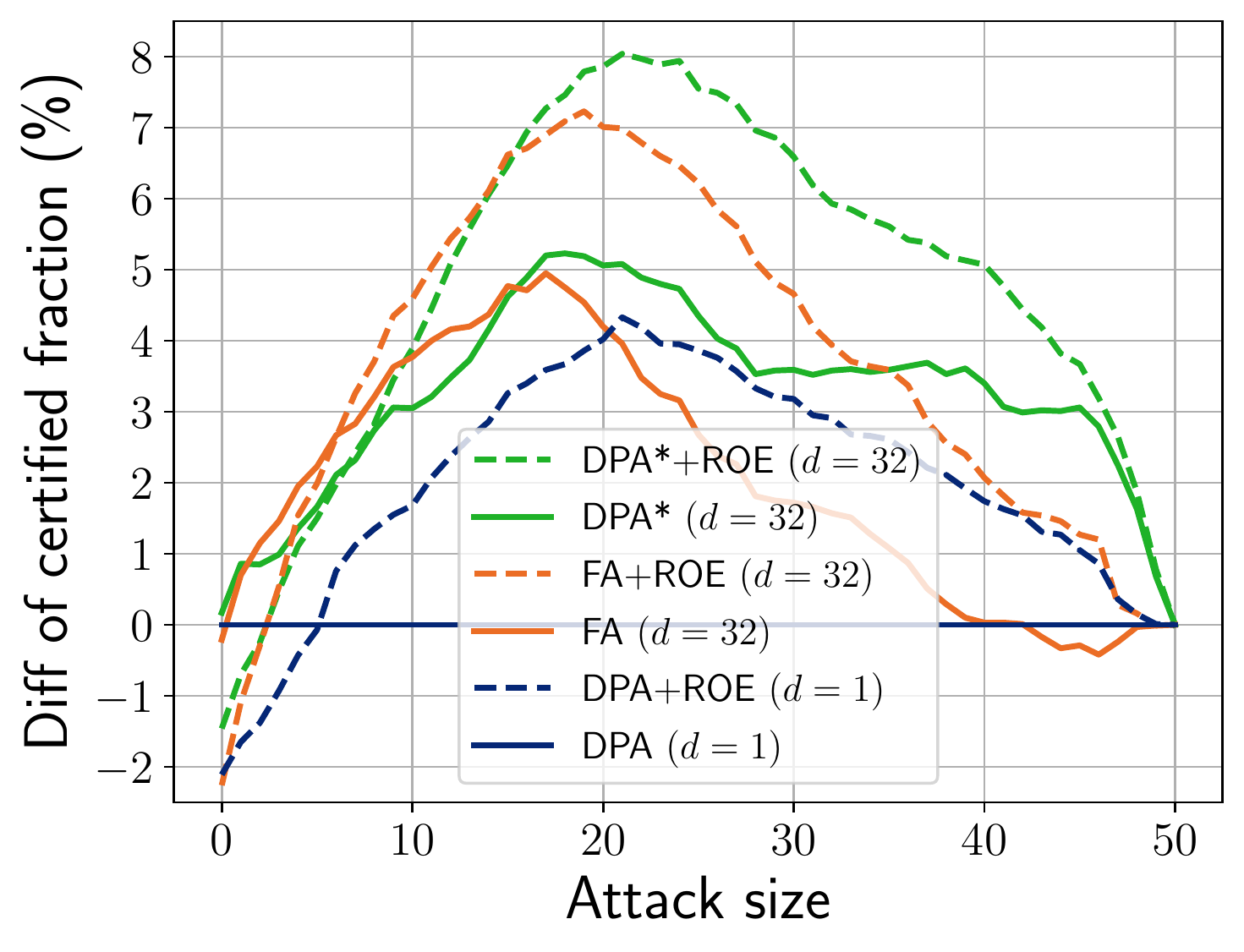}
    \end{minipage}
    \label{fig:app_improvement}
  \end{subfigure}\\
\caption{
\textbf{First row}:
The curves of certified fraction of different methods on different datasets.
\textbf{Second row}:
The improvements of certified fraction over DPA. 
Plots in the first columns refers to CIFAR-10 ($k=50$),
plots in the second column refers to GRSTB ($k=100$).
}
\label{fig:app_certified_fraction}
\end{figure*}

\section{Pseudocodes}
In this section, we provide pseudocodes that
were omitted from the main text. 
\begin{algorithm}[H]
  \caption{ROE algorithm}
  \label{alg:ROE}
  \begin{algorithmic}[1]
    \Require{Trianed classifiers $\{f_i\}_{i=1}^k$, test sample $x \in \mathcal{X}$.}
    \Ensure{Prediction of Run-off election for $x$.}
    \Function{\Predict}{$\{f_i\}_{i=1}^k, x$}
      \State \textcolor{gray}{/* Round 1 */}
      \For{$c\in \mathcal{C}$}
        \State $\none_c \gets \sum_{i=1}^k\ind{f_i(x) = c}$
      \EndFor
      \State $c_1 \gets \argmax_{c} \none_c, \quad c_2 \gets \argmax_{c\ne c_1} \none_c$.
      \State \textcolor{gray}{/* Round 2 */}
      \State $\nrun_{c_1} \gets \sum_{i=1}^k \ind{f_i^{\logit}(x, c_1) > f_i^{\logit}(x, c_2)}$
      \State $\nrun_{c_2} \gets k - \nrun_{c_2}$
      \State \Return $\argmax_{c\in \{c_1, c_2\}} \nrun_{c}$
    \EndFunction
  \end{algorithmic}
\end{algorithm}

\begin{algorithm}[H]
  \caption{Training of classifiers in \dparoe{}}
  \label{alg:DPA}
  \begin{algorithmic}[1]
    \Require{Dataset $D \subseteq \mathcal{X} \times \mathcal{C}$, hash function $h : \mathcal{X} \to [k]$, test sample $x \in \mathcal{X}$.}
    \Ensure{Prediction of DPA for $x$.}
    \Function{\Train}{$D, h$}
      \For{$i\gets 1, \dots, k$}
        \State $D_i \gets \{x \in D: h(x) = i\}, \quad f_i \gets f_{D_i}$
      \EndFor
      \State \Return $\{f_i\}_{i=1}^{k}$
    \EndFunction
  \end{algorithmic}
\end{algorithm}

\begin{algorithm}[H]
  \caption{Training of classifiers in \faroe{}}
  \label{alg:FA}
  \begin{algorithmic}[1]
    \Require{Dataset $D \subseteq \mathcal{X} \times \mathcal{C}$, hash functions $\hsplit : \mD \to [kd], \hspread: [kd] \to [kd]^d$.}
    \Ensure{Trained $\{f_i\}_{i=1}^{kd}$ classifiers.}
    \Function{\Train}{$D, \hsplit, \hspread$}
      \For{$i\gets 1, \dots, kd$}
        \State $D_{i} \gets \{x: i \in \hspread(\hsplit(x))\}, \quad f_i \gets f_{D_i}$
      \EndFor
      \State \Return $\{f_i\}_{i=1}^{kd}$
    \EndFunction
  \end{algorithmic}
\end{algorithm}

\begin{algorithm}[H]
  \caption{\fcertwo{} algorithm for \faroe{}}
  \label{alg:fa-cert}
  \begin{algorithmic}[1]
    \Require{Array $\pw$ of size $kd$, an integer $\gap$.}
    \Ensure{Minimum number of poisoned samples needed to make the gap non-positive.}
    \Function{\CertFA}{$\pw, \gap$}
    \State Sort array \pw\ in decreasing order
    \State $\mycount \gets 0$
    \While {\gap $> 0$}
      \State \gap $\gets$ \gap $- \pw_{\mycount}$
      \State $\mycount \gets \mycount + 1$
    \EndWhile
    \State \Return $\mycount$
    \EndFunction
  \end{algorithmic}
\end{algorithm}

\section{Effect of lucky seed}
\label{app:repeat}
We note that everything in DPA, FA, \dpastar{}, and our method is deterministic, so when base classifiers are fixed, then all certificates are deterministic.
Same as existing work, we evaluated our method in as many different settings as possible, i.e., we evaluated our method on different datasets, different values for the number of partitions ($k$), and different values of $d$.
In our experiments, we have noticed the error bars are very small (thus we focused more on different settings instead of repeating experiments).
To show this empirically, we run multiple trials on the CIFAR-10 dataset with $k=50$, on both DPA and FA.
Results can be seen in Table~\ref{table:repeat}. Error bars are negligible.

\begin{table}[t]
    \caption{
        Average certified accuracy of DPA, FA, \dparoe{}, and \faroe{} on CIFAR-10 with $k=50$ partitions. 
        Results of DPA and \dparoe{} are averaged over \textbf{$16$ trials} and
        results of FA and \faroe{} are reported when $d=16$ over \textbf{$4$ trials.}
        (certified accuracy is reported in the form of mean $\pm$ std)
    }
    \vskip 0.15in
    \label{table:repeat}
    \resizebox{\columnwidth}{!}{
    \begin{tabular}{|c|c|c|c|c|c|}
        \hline \hline
        \multicolumn{6}{|c|}{average of certified fraction over multiple trials} \\ 
        \hline \hline
        method & $B \leq 5$ & $B \leq 10$ & $B \leq 15$ & $B \leq 18$ & $B \leq 22$\\ 
        \hline
        DPA & $58.63\%\ (\pm 0.20\%)$ & $46.45\%\  (\pm 0.20\%)$ & $33.43\%\  (\pm 0.21\%)$ & $25.26\%\ (\pm 0.17\%)$ & $19.54\%\  (\pm 0.22\%)$\\ \hline
        \dparoe{} & $60.00\%\  (\pm 0.19\%)$ & $49.14\%\  (\pm 0.21\%)$ & $36.34\%\  (\pm 0.22\%)$ & $27.65\%\ (\pm 0.22\%)$ & $21.49\%\  (\pm 0.22\%)$\\ \hline \hline
        FA & $60.21\%\ (\pm 0.25\%)$ & $48.49\%\  (\pm 0.26\%)$ & $34.27\%\  (\pm 0.24\%)$ & $25.44\%\ (\pm 0.12\%)$ & $19.92\%\  (\pm 0.06\%)$\\ \hline
        \faroe{} & $61.48\%\ (\pm 0.18\%)$ & $50.90\%\ (\pm 0.20\%)$ & $37.16\%\  (\pm 0.03\%)$ & $28.45\%\  (\pm 0.11\%)$ & $22.07\%\ (\pm 0.09\%)$\\ \hline
    \end{tabular}}
\end{table}

\newcommand{\g}{\textnormal{g}}
\newcommand{\vonecone}{c_1}
\newcommand{\vonectwo}{c_2}
\newcommand{\vtwocone}{c}
\newcommand{\vtwoctwo}{c_1}
\newcommand{\vtwocthree}{c_2}
\newcommand{\allf}{\{f_i\}_{i=1}^{k}}
\newcommand{\mdp}{\textnormal{dp}}
\newcommand{\gsum}{\g_{\vtwoctwo + \vtwocthree}}

\section{Proofs}
In this section,
we first provide some basic lemmas.
Secondly, we prove Theorem~\ref{thm:cert}.
Then, we analyze how to calculate the certificate in \dparoe{} and \faroe{}
by proving lemmas ~\ref{lem:certv1-dpa}, \ref{lem:certv2-dpa}, \ref{lem:certv1-fa}, \ref{lem:certv2-fa} and another lemma which is provided later.

\subsection{Preliminaries}

\begin{lemma}
\label{lem:1}
For any two different classes $c_1, c_2 \in \mC$, $c_1$ beats $c_2$ if and only if $\gap(\allf, c_1, c_2) > 0$. 
\end{lemma}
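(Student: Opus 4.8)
The statement to prove is Lemma~\ref{lem:1}: for two distinct classes $c_1, c_2$, we have that $c_1$ beats $c_2$ if and only if $\gap(\allf, c_1, c_2) > 0$. The plan is to observe that this is essentially a definitional unpacking together with a careful tie-breaking argument, so I would structure the proof around what ``beats'' means operationally and how the indicator term $\ind{c_2 > c_1}$ in the definition of $\gap$ encodes the tie-breaking rule.

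First I would recall the relevant definitions. By the definition of $\gap$, we have
\begin{align*}
  \gap(\allf, c_1, c_2) = N_{c_1} - N_{c_2} + \ind{c_2 > c_1},
\end{align*}
where $N_c = \sum_i \ind{f_i(x) = c}$ counts the votes for class $c$. Operationally, ``$c_1$ beats $c_2$'' should mean that in a head-to-head comparison, $c_1$ is selected over $c_2$ by the $\argmax$ rule, which (as stated in the paper) breaks ties in favor of the smaller index. So $c_1$ beats $c_2$ precisely when either $N_{c_1} > N_{c_2}$, or $N_{c_1} = N_{c_2}$ and $c_1 < c_2$.

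The main step is then to verify that this ``beats'' condition is equivalent to $\gap(\allf, c_1, c_2) > 0$, which I would do by case analysis on the sign of $N_{c_1} - N_{c_2}$. Since the vote counts are integers, $N_{c_1} - N_{c_2}$ is an integer, and the correction term $\ind{c_2 > c_1} \in \{0, 1\}$ is small enough not to overturn a strict integer inequality. Concretely: if $N_{c_1} > N_{c_2}$, then $N_{c_1} - N_{c_2} \ge 1$, so $\gap \ge 1 > 0$ regardless of the indicator, matching ``$c_1$ beats $c_2$''. If $N_{c_1} < N_{c_2}$, then $N_{c_1} - N_{c_2} \le -1$ and adding at most $1$ gives $\gap \le 0$, so $c_1$ does not beat $c_2$. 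The delicate case is the tie $N_{c_1} = N_{c_2}$: here $\gap = \ind{c_2 > c_1}$, which is $1$ (positive) exactly when $c_1 < c_2$, i.e.\ exactly when the tie-breaking rule awards the comparison to $c_1$; and it is $0$ when $c_1 > c_2$, i.e.\ when $c_1$ loses the tie-break. (The case $c_1 = c_2$ is excluded by hypothesis.) In all three cases the logical equivalence holds.

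I expect the only real subtlety — and hence the ``obstacle'' — to be getting the tie-breaking convention exactly right: one must confirm that the indicator is $\ind{c_2 > c_1}$ rather than $\ind{c_1 > c_2}$, so that a smaller index $c_1$ (the favored one under tie-breaking) receives the $+1$ boost. This sign bookkeeping is the whole content of the lemma, since everything else is immediate from the integrality of vote counts. Once the definitions and the tie-break convention are aligned, the equivalence follows directly from the three-case split above.
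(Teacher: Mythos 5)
Your proof is correct and takes essentially the same route as the paper's: both unpack ``beats'' into the operational condition ($N_{c_1} > N_{c_2}$, or $N_{c_1} = N_{c_2}$ with $c_1 < c_2$ by the smaller-index tie-break) and check, using integrality of the vote counts and the indicator $\ind{c_2 > c_1}$, that this is equivalent to $\gap(\allf, c_1, c_2) > 0$. Your single three-way case split on the sign of $N_{c_1} - N_{c_2}$ handles both implications at once, whereas the paper argues the two directions separately, but the substance is identical.
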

\begin{proof}
Let $N_c$ denote the number of votes that class $c$ has, i.e., $N_c = \sum_{i=1}^{k} \ind{f_i(x) = c}$.
If $c_1$ beats $c_2$, then either $N_{c_1} > N_{c_2}$, or $N_{c_1} = N_{c_2}$ and $c_1 < c_2$.
Therefore $\gap(\allf, c_1, c_2) = N_{c_1} - N_{c_2} + \ind{c_2 > c_1} > 0$.

If $\gap(\allf, c_1, c_2) = N_{c_1} - N_{c_2} + \ind{c_2 > c_1} > 0$, then either $N_{c_1} > N_{c_2}$, or $N_{c_1} = N_{c_2}$ and $c_2 > c_1$.
Therefore, $c_1$ is dominant over $c_2$ and beats $c_2$.
\end{proof}

\begin{lemma}
\label{lem:2}
If the adversary wants to change the prediction of models such that
class $c_2$ beats class $c_1$,
then he needs to make sure that $\gap(\allf, c_1, c_2)$ becomes non-positive, i.e., $\gap(\allf, c_1, c_2) \leq 0$.
\end{lemma}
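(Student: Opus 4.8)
The statement to prove is Lemma~\ref{lem:2}, which asserts that for the adversary to make class $c_2$ beat class $c_1$, it is necessary that $\gap(\allf, c_1, c_2) \leq 0$ after poisoning. This is essentially the contrapositive of one direction of Lemma~\ref{lem:1}, which has already been established, so the plan is to reduce this claim directly to that earlier result rather than re-deriving anything about vote counts from scratch.

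\textbf{Plan of proof.} The plan is to argue by contradiction using Lemma~\ref{lem:1}. First I would consider the poisoned ensemble $\allf$ (i.e. the models after the adversary has altered the training set) and suppose toward a contradiction that $\gap(\allf, c_1, c_2) > 0$. By the ``if'' direction of Lemma~\ref{lem:1}, applied to this poisoned ensemble, the condition $\gap(\allf, c_1, c_2) > 0$ implies that $c_1$ beats $c_2$. Since the tie-breaking convention makes ``beats'' a strict relation between any two distinct classes --- exactly one of $c_1, c_2$ beats the other --- the fact that $c_1$ beats $c_2$ means $c_2$ does \emph{not} beat $c_1$. This contradicts the adversary's goal. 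Hence $\gap(\allf, c_1, c_2) \leq 0$ is necessary.

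\textbf{Key steps in order.} The steps are: (i) restate the goal as: if $c_2$ beats $c_1$ in the poisoned ensemble, then $\gap(\allf, c_1, c_2) \le 0$; (ii) invoke Lemma~\ref{lem:1} in the form ``$c_1$ beats $c_2 \iff \gap(\allf, c_1, c_2) > 0$''; (iii) observe that ``$c_2$ beats $c_1$'' and ``$c_1$ beats $c_2$'' are mutually exclusive because of the total order induced by vote count with index-based tie-breaking, so at most one can hold; (iv) conclude that if $c_2$ beats $c_1$ then $c_1$ does not beat $c_2$, so by Lemma~\ref{lem:1} we cannot have $\gap(\allf, c_1, c_2) > 0$, giving $\gap(\allf, c_1, c_2) \le 0$.

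\textbf{Main obstacle.} There is essentially no hard computation here; the only subtlety worth being careful about is making explicit that ``beats'' is a \emph{strict total} relation so that ``$c_2$ beats $c_1$'' genuinely precludes ``$c_1$ beats $c_2$.'' This relies on the standing assumption that the logits (and hence the induced vote counts together with the index tie-break) never produce a genuine tie between two distinct classes, so exactly one of the two directions holds. I would state this exclusivity explicitly, since it is the one place where the argument could be accused of hand-waving, and then the rest follows immediately from Lemma~\ref{lem:1}.
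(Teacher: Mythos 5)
Your proof is correct, but it takes a different route from the paper's. The paper proves this lemma by direct computation: from ``$c_2$ beats $c_1$'' it invokes Lemma~\ref{lem:1} to get $\gap(\allf, c_2, c_1) = N_{c_2} - N_{c_1} + \ind{c_1 > c_2} > 0$, and then splits into the two cases $c_1 > c_2$ and $c_2 > c_1$ to verify by hand that $\gap(\allf, c_1, c_2) = N_{c_1} - N_{c_2} + \ind{c_2 > c_1} \le 0$ in each. You instead argue by contradiction through the \emph{other} direction of Lemma~\ref{lem:1}: if $\gap(\allf, c_1, c_2) > 0$ then $c_1$ beats $c_2$, which is incompatible with $c_2$ beating $c_1$ since ``beats'' is a strict total relation on distinct classes. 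These are genuinely the same mathematics packaged differently --- the paper's two-case indicator computation is precisely an unpacked proof that the beats relation is asymmetric --- but your abstraction is cleaner: it uses Lemma~\ref{lem:1} once in each direction and never touches the indicator arithmetic, whereas the paper's version is more self-contained at the level of the gap definition. One small correction: the exclusivity of ``$c_1$ beats $c_2$'' and ``$c_2$ beats $c_1$'' does \emph{not} rest on the assumption that logits never tie (that assumption only serves each model's own argmax); vote counts $N_{c_1} = N_{c_2}$ can certainly tie, and it is the index-based tie-break $\ind{c' > c}$ alone that makes exactly one of the two classes beat the other. With that attribution fixed, your argument is complete.
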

\begin{proof}
According to Lemma~\ref{lem:1},
after the adversary poisons models such that $c_2$ beats $c_1$, 
$\gap(\allf, c_2, c_1) > 0$.
Now, we show that it further implies that $\gap(\allf, c_1, c_2) \leq 0$.
Since $\gap(\allf, c_2, c_1) > 0$, $N_{c_2} - N_{c_1} + \ind{c_1 > c_2} > 0$. There are two cases:
\begin{itemize}
    \item \textbf{$c_1 > c_2$}. In this case, $N_{c_2} - N_{c_1} \geq 0$, which further implies that
    \begin{align*}
        \gap(\allf, c_1, c_2) = 
        N_{c_1} - N_{c_2} + \ind{c_2 > c_1} =
        N_{c_1} - N_{c_2} \le 0
    \end{align*}
    \item \textbf{$c_2 > c_1$}. In this case, $N_{c_2} - N_{c_1} > 0$, which further implies that
    \begin{align*}
        \gap(\allf, c_1, c_2) = 
        N_{c_1} - N_{c_2} + \ind{c_2 > c_1} =
        N_{c_1} - N_{c_2} + 1 \le 0
    \end{align*}
\end{itemize}
\end{proof}

\subsection{Proof of Theorem \ref{thm:cert}}
\label{pf_thm:cert}
\begin{proof}
We consider how the adversary can change the prediction of our aggregation method on sample $x$. Note that we further eliminate $x$ from notations for the sake of simplicity. 
Let $\cpred$ denote the predicted class and $\csec$ denote the other selected class in Round 1.
The adversary should ensure that either $\cpred$ is not selected as the top two classes in Round 1 or if it makes it to Round 2, it loses this round to another class. 

We start with the first strategy. As $\cpred$ should be eliminated from the top-two classes of Round 1, it means that the adversary needs to 
choose two different classes $c_1, c_2\in \mC \backslash \{\cpred\}$ and ensure
that $c_1$ and $c_2$ both are dominant over $\cpred$ in this round.
This is the exact definition of $\fcerthree$, i.e., it needs at least 
$\fcerthree(\cpred, c_1, c_2)$ poisoned samples. As the adversary can choose classes $c_1, c_2$, eliminating $\cpred$ in Round 1 requires it at least $\cert^{\rone}$ poisoned sample where
\begin{align*}
    \cert^{\rone} := \min_{c_1, c_2\in \mC \backslash \{\cpred\}}\fcerthree(\cpred, c_1, c_2).
\end{align*}

Next, in the second strategy, the adversary ensures that class $c'$ beats $\cpred$ in Round 2. To do so, it needs to poison models such that (a) $c'$ is selected in the top-two classes of Round 1, (b) $c'$ beats $\cpred$ in Round 2.

For (a), $c'$ should beat either of $\cpred$ or $\csec$ in Round 1. 
As we have already considered the case that $\cpred$ is eliminated in Round 1, we focus on the case that $c'$ needs to beat $\csec$. According to the definition of 
$\fcertwo$, this requires at least $\cert^{\rtwo}_{c', 1}$ samples where 
\begin{align*}
    \cert^{\rtwo}_{c', 1} := \fcertwo(\csec, c')
    .
\end{align*}
Note that $c'$ can be $\csec$.

For (b), let $g_i^{c'} : \mathcal{X} \to \{c', \cpred\}$ denote the binary $\cpred$ vs $c'$ classifier obtained from $f_i$, i.e.,
\begin{align*}
    g_i^{c'}(x) := \begin{cases}
        c' \quad&\text{if } f_i^{\logit}(x, c') > f_i^{\logit}(x, \cpred)\\
        \cpred \quad&\text{otherwise}
    \end{cases}.
\end{align*}
The adversary needs to ensure that in this binary classification problem, class $c'$ beats $\cpred$. This is the definition of $\fcertwo$, i.e., it requires at least $\cert^{\rtwo}_{c', 2}$ poisoned samples where 
\begin{align*}
    \cert^{\rtwo}_{c', 2} := 
    \fcertwo{}(\{g_i^{c'}\}_{i=1}^{k}, \cpred, c').
\end{align*}
Overall, since the adversary can choose the class $c'$, we obtain the bound
\begin{align*}
    \cert^{\rtwo} := \min_{c' \ne \cpred} \max\{\cert^{\rtwo}_{c', 1}, \cert^{\rtwo}_{c', 2}\}.
\end{align*}
\end{proof}    

Now we explain how $\fcertwo$ and $\fcerthree$ can be efficiently calculated
in both \dparoe{} and \faroe{}.

\subsection{Deep Partition Aggregation + Run-off Election}
\label{sec:app_roe_dpa}
We first prove Lemma~\ref{lem:certv1-dpa} which calculates the value of
$\fcertwo$ in \dparoe.
After that, we focus on how to calculate $\fcerthree$ in this method by proving Lemma~\ref{lem:certv2-dpa}.

\subsubsection{Proof of Lemma \ref{lem:certv1-dpa}}
\label{pf_lem:certv1-dpa}
\begin{proof}
We want to find the value of $\fcertwo(\allf, \vonecone, \vonectwo)$.
Based on Lemma~\ref{lem:2}, in order to ensure that $\vonectwo$ beats $\vonecone$, $\gap(\vonecone, \vonectwo)$ should become non-positive, i.e., $\gap(\vonecone, \vonectwo) \le 0$. Now we consider how poisoning each partition can change the $\gap$. For simplicity, we show this $\gap$ with $\g$.

When poisoning partition $D_i$, the adversary can change the prediction of $f_i$ to be whatever it wants. 
We will consider how the adversary can change $\g$ by fooling  model $f_i$.
Note that by poisoning $D_i$, none of the other classifiers change.

\begin{enumerate}
    \item \textbf{$f_i(x) = \vonecone$}.
    In this case, if the adversary fools this model and changes its prediction to $\tilde{c} \neq \vonecone$, we have two cases:
    \begin{itemize}
        \item $\tilde{c} = \vonectwo$. In this case, $\g$ decreases by $2$. 
        \item $\tilde{c} = c'$. In this case, $\g$ decreases by $1$.
    \end{itemize}
    \item \textbf{$f_i(x) = \vonectwo$}.
    In this case, if the adversary changes the prediction to $\tilde{c} \neq \vonectwo$, we have two cases:
    \begin{itemize}
        \item $\tilde{c} = \vonecone$. In this case, $\g$ increases by $2$.
        \item $\tilde{c} = c'$. In this case, $\g$ increases by $1$.
    \end{itemize}
    \item \textbf{$f_i(x) = c'$} where $c' \notin \{\vonecone, \vonectwo\}$
    In this case, if the adversary changes the prediction to $\tilde{c} \neq c'$, we have three cases:
    \begin{itemize}
        \item $\tilde{c} = \vonecone$. In this case, $\g$ increases by $1$.
        \item $\tilde{c} = \vonectwo$. In this case, $\g$ decreases by $1$.
        \item $\tilde{c} = c''$. In this case, $\g$ does not change.
    \end{itemize}
\end{enumerate}

As seen above, by poisoning a~single partition, the adversary can reduce $\g$ by at most $2$.
Hence, ensuring $\g \le 0$ requires at least $\max\left(0, \ceil{\frac{\g}{2}}\right)$ poisoned samples. This finishes the proof.
\end{proof}

\subsubsection{Proof of Lemma \ref{lem:certv2-dpa}}
\label{pf_lem:certv2-dpa}
\begin{proof}
    We want to find the value of $\fcerthree(\allf, \vtwocone, \vtwoctwo, \vtwocthree)$.
    Based on Lemma~\ref{lem:2}, in order to ensure that both $\vtwoctwo$ and $\vtwocthree$ beat $\vtwocone$,
    both $\gap(\vtwocone, \vtwoctwo)$ and $\gap(\vtwocone, \vtwocthree)$ should become non-positive. For simplicity, we denote those gaps by $\g_1$ and $\g_2$, respectively.

    When poisoning partition $D_i$, the adversary can change the prediction of $f_i$ to be whatever it wants. 
    Now we consider the effect of poisoning model $f_i$ on $\g_1$ and $\g_2$.
    Note that by poisoning $D_i$, none of the other classifiers change.

    \begin{enumerate}
        \item \textbf{$f_i(x) = \vtwocone$}.
        In this case, if the adversary fools this model and changes its prediction to $\tilde{c} \neq \vtwocone$, we have three cases:
        \begin{itemize}
            \item $\tilde{c} = \vtwoctwo$. In this case, $\g_1$ decreases by $2$ while $\g_2$ decreases by $1$ \textbf{(\textit{type (i)})}.
            \item $\tilde{c} = \vtwocthree$. In this case, $\g_1$ decreases by $1$ while $\g_2$ decreases by $2$ \textbf{(\textit{type (ii)})}.
            \item $\tilde{c} = c'$ where $c' \notin \{\vtwoctwo, \vtwocthree\}$.
            In this case, both $\g_1$ and $\g_2$ are reduced by $1$.
        \end{itemize}
        \item \textbf{$f_i(x) = \vtwoctwo$}.
        In this case, if the adversary changes the prediction to $\tilde{c} \neq \vtwoctwo$, we have three cases:
        \begin{itemize}
            \item $\tilde{c} = \vtwocone$. In this case, $\g_1$ increases by $2$ while $\g_2$ increases by $1$.
            \item $\tilde{c} = \vtwocthree$. In this case, $\g_1$ increases by $1$ while $\g_2$ decreases by $1$.
            \item $\tilde{c} = c'$ where $c' \notin \{\vtwocone, \vtwocthree\}$. In this case, $\g_1$ increases by $1$ while $\g_2$ remains same.
        \end{itemize}
        \item \textbf{$f_i(x) = \vtwocthree$}.
        In this case, if the adversary changes the prediction to $\tilde{c} \neq \vtwocthree$, we have three cases:
        \begin{itemize}
            \item $\tilde{c} = \vtwocone$. In this case, $\g_1$ increases by $1$ while $\g_2$ increases by $2$.
            \item $\tilde{c} = \vtwoctwo$. In this case, $\g_1$ decreases by $1$ while $\g_2$ increases by $1$.
            \item $\tilde{c} = c'$ where $c' \notin \{\vtwocone, \vtwoctwo\}$. In this case, $\g_1$ remains same while $\g_2$ increases by $1$.
        \end{itemize}
        \item \textbf{$f_i(x) = c'$} where $c' \notin \{\vtwocone, \vtwoctwo, \vtwocthree\}$
        In this case, if the adversary changes the prediction to $\tilde{c} \neq c'$, we have four cases:
        \begin{itemize}
            \item $\tilde{c} = \vtwocone$. In this case, both $\g_1$ and $\g_2$ increase by $1$.
            \item $\tilde{c} = \vtwoctwo$. In this case, $\g_1$ decreases by $1$ while $\g_2$ remains same.
            \item $\tilde{c} = \vtwocthree$. In this case, $\g_1$ remains the same while $\g_2$ decreases by $1$.
            \item $\tilde{c} = c''$ where $c'' \notin \{\vtwocone, \vtwoctwo, \vtwocthree\}$. In this case, none of $\g_1$ and $\g_2$ change.
        \end{itemize}
    \end{enumerate}
    
As the adversary's goal is to make both $\g_1$ and $\g_2$ non-positive with the minimum number of poisoned samples,
based on the scenarios above,
\textbf{by poisoning a single model}, 
its power is bounded either with \textit{type (i)} decreasing $\g_1$ by $2$ and decreasing $\g_2$ by $1$,
or with \textit{type (ii)} decreasing $\g_1$ by $1$ and decreasing $\g_2$ by $2$.

Now we use induction to prove that array $\mdp$ given in the lemma statement,
can find a~lower~bound on the minimum number of poisoned samples.
For base cases, we consider two cases when $\min(\g_1, \g_2) \le 1$.
\begin{itemize}
    \item $\g_1 = \g_2 = 0$. In this case, no poisoned samples needed so $\mdp[0, 0] = 0$
    \item $\max(\g_1, \g_2) > 0$ and $\min(\g_1, \g_2) \le 1$. In this case, the adversary needs at least one poisoned sample.
    Furthermore, by one poisoned sample, both $\g_1$ and $\g_2$ can be reduced by at most $2$.
    Hence, we need at least $\ceil{\frac{\max(\g_1, \g_2)}{2}}$ poisoned samples. This implies that $\mdp[\g_1, \g_2]=\ceil{\frac{\max(\g_1, \g_2)}{2}}$.
\end{itemize}

Now we find a~lower~bound when $i = \g_1$ and $j = \g_2$. Note that $i, j \ge 2$. The adversary has two options when using one poisoned sample.
(1) It reduces $i$ by $2$ and $j$ by $1$, then according to induction, it needs at least $\mdp[i-2, j-1]$ more poisoned samples.
(2) It reduces $i$ by $1$ and $j$ by $2$. Hence it requires at least $\mdp[i-1, j-2]$ more poisoned samples.

As a result, the minimum number of poisoned samples is at least $\mdp[i, j] = 1 + \min(\mdp[i-2, j-1], \mdp[i-1, j-2])$.

This finishes the proof.
\end{proof}

\subsection{Finite Aggregation + Run-off Election}
In this part, we first provide a lemma that we use to prove Lemmas~\ref{lem:certv1-fa} and \ref{lem:certv2-fa}.

\begin{lemma}
\label{lem:3}
    Given the array $(\pw_b)_{b \in [kd]}$ which represents the adversary's power in terms of reducing the gap $\g$.
    Let $\pi = (\pi_1, \pi_2, ..., \pi_{kd})$ be a~permutation on $[kd]$ such that
    $\pw_{\pi_1} \ge \pw_{\pi_2} \ge ... \ge \pw_{\pi_{kd}}$,
    the adversary needs to poison at least $t$ buckets to make $\g \le 0$ where $t$ is the minimum
    non-negative integer such that $\sum_{i=1}^{t} \pw_{\pi_i} \ge \g$.
    Furthermore, $t = \textsc{CertFA}((\pw_b)_{b \in [kd]}, \g)$.
\end{lemma}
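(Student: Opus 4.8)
The plan is to split the statement into one structural fact — that poisoning is subadditive across buckets — and then a short greedy argument, after which I verify that \textsc{CertFA} returns exactly the threshold $t$. First I would dispose of the trivial case $\g \le 0$: here the empty set of buckets already achieves $\g \le 0$, the least index with $\sum_{i=1}^{t}\pw_{\pi_i} \ge \g$ is $t = 0$ (the empty sum is $0 \ge \g$), and \textsc{CertFA} never enters its loop and returns $0$, so all three quantities agree. Hence I assume $\g > 0$ from here on.

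The crux is an upper bound on what any poisoning can achieve. Writing $\pw_b = \sum_{j \in \hspread(b)} \delta_j$, where $\delta_j \ge 0$ is the maximum decrease of $\g$ obtainable by changing the single vote of model $f_j$ (in the 1v1 case $\delta_j = 2\ind{f_j(x)=c}+\ind{f_j(x)\notin\{c,c'\}}$, and in the 2v1 case its analogue for $\gapmean$), I would show that poisoning any set $S$ of buckets decreases $\g$ by at most $\sum_{b\in S}\pw_b$. Poisoning the buckets in $S$ can only alter the classifiers $f_j$ with $j\in\bigcup_{b\in S}\hspread(b)$, and each such $f_j$ is a single model whose vote moves at most once, so it contributes a decrease of at most $\delta_j$. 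Summing over the affected models and noting that every affected $j$ lies in $\hspread(b)$ for at least one $b\in S$ — so $\delta_j$ is counted at least once, with all terms nonnegative — gives $\text{decrease}\le\sum_{j\text{ affected}}\delta_j\le\sum_{b\in S}\sum_{j\in\hspread(b)}\delta_j=\sum_{b\in S}\pw_b$. When the images $\hspread(b)$ overlap this is generally a strict overcount, but as an upper bound that is exactly what we need.

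Given this bound, the lower bound is immediate. Suppose the adversary poisons $s := |S|$ buckets and makes $\g \le 0$; then the achieved decrease is at least $\g$, so $\sum_{b\in S}\pw_b \ge \g$. Since the left side is a sum of $s$ entries of $(\pw_b)$, it is at most the sum of the $s$ largest entries, giving $\sum_{i=1}^{s}\pw_{\pi_i}\ge \g$. By the definition of $t$ as the least index with $\sum_{i=1}^{t}\pw_{\pi_i}\ge \g$, this forces $s\ge t$, which is precisely the claim that at least $t$ buckets are required. To see $t=\textsc{CertFA}((\pw_b)_{b},\g)$, I would use a loop invariant: after sorting $\pw$ in decreasing order, \textsc{CertFA} maintains that after $m$ passes $\mycount = m$ and the running value equals $\g-\sum_{i=1}^{m}\pw_{\pi_i}$; the loop halts at the first $m$ with this value $\le 0$, i.e.\ the least $m$ with $\sum_{i=1}^{m}\pw_{\pi_i}\ge\g$, and returns it, which is $t$.

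The main obstacle is the subadditive bound of the second paragraph, which is the one place the FA structure matters: because a single model can be fed by several buckets, one must argue that the model flips only once and that charging its decrease $\delta_j$ to any one containing bucket (hence to $\sum_{b\in S}\pw_b$) is legitimate; the greedy and algorithmic steps are then routine. For completeness I would also observe that poisoning every bucket can flip all models toward the opponent class and drive $\g$ negative, so the total available power exceeds $\g$ and $t$ is always well defined.
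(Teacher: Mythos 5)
Your proof is correct and follows essentially the same route as the paper's: you bound the total achievable decrease of the gap by $\sum_{b\in S}\pw_b$ (observing that a model fed by several poisoned buckets flips only once and is merely overcounted, which only strengthens the upper bound), compare that sum against the prefix sums of the sorted powers to force $s \ge t$, and verify that $\textsc{CertFA}$ computes the threshold. Your write-up is somewhat more explicit than the paper's---the per-model decomposition $\pw_b=\sum_{j\in\hspread(b)}\delta_j$, the trivial case $\g\le 0$, and the loop invariant for the algorithm---but these are refinements of the identical argument rather than a different approach.
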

\begin{proof}
    Let $B = \{b_1, b_2, ..., b_{t'}\}$ be a set of buckets that if the adversary poisons them,
    can ensure that $\g \le 0$. We show that $t \le t'$.
    Since poisoning bucket $b_i$ can reduce the gap by $\pw_{b_i}$, 
    poisoning buckets in $B$ can reduce the gap by \textbf{at most} $\sum_{i=1}^{t'} \pw_{b_i}$.
    The reason we say "at most" is the fact that
    a base classifier uses several buckets in its training sets,
    so by poisoning more than one of those buckets, 
    we count the effect of poisoning that particular classifier several times. 
    As $\pi$ sorts the array in decreasing order, $\sum_{i=1}^{t'} \pw_{\pi_i} \ge \sum_{i=1}^{t'} \pw_{b_i}$.
    This implies that $\sum_{i=1}^{t'} \pw_{\pi_i} \ge \g$.
    According to the definition of $t$ which is
    the minimum non-negative integer such that $\sum_{i=1}^{t} \pw_{\pi_i} \ge \g$, we conclude that $t \le t'$.
    In order to find $t$, we sort array $\pw$ in decreasing order
    and while the sum of the elements we have picked does not reach $\g$,
    we keep picking elements, moving from the left side to the right side of the array.
    A pseudocode of how to find $t$ is given in Algorithm~\ref{alg:fa-cert}, which further proves that 
    $t = \textsc{CertFA}((\pw_b)_{b \in [kd]}, \g)$.
    
    Note that $t \le kd$ always exists as fooling all models to do in favor of a desired class guarantees that the class will beat all other classes.
\end{proof}

\subsubsection{Proof of Lemma \ref{lem:certv1-fa}}
\label{pf_lem:certv1-fa}
\begin{proof}
    We can have the same argument of Section~\ref{pf_lem:certv1-dpa} so 
    in order to ensure that $\vonectwo$ beats $\vonecone$, $\gap(\vonecone, \vonectwo)$ should become non-positive, i.e., $\gap(\vonecone, \vonectwo) \le 0$. Now we consider how poisoning each bucket can change the $\gap$. For simplicity, we show this $\gap$ with $\g$.

    Let $A$ be the set of indices of classifiers that can be fooled after poisoning $b$.
    Formally, $A := \hspread(b)$.
    We consider $j \in A$ and examine how $\g$ changes as $f_j$ is poisoned.
    \begin{enumerate}
        \item \textbf{$f_j(x) = \vonecone$}.
        In this case, if the adversary fools this model and changes its prediction to $\tilde{c} \neq \vonecone$, we have two cases:
        \begin{itemize}
            \item $\tilde{c} = \vonectwo$. In this case, $\g$ decreases by $2$.
            \item $\tilde{c} = c'$ where $c' \neq \vonectwo$. In this case, $\g$ is reduced by $1$.
        \end{itemize}
        This implies that in this case, in terms of reducing $\g$, the adversary's power is \textbf{bounded by $2$}.
        \item \textbf{$f_j(x) = \vonectwo$}.
        In this case, if the adversary changes the prediction to $\tilde{c} \neq \vonectwo$, we have two cases:
        \begin{itemize}
            \item $\tilde{c} = \vonecone$. In this case, $\g$ increases by $2$.
            \item $\tilde{c} = c'$ where $c' \neq \vonecone$. In this case, $\g$ increases by $1$.
        \end{itemize}
        This implies that in this case, in terms of reducing $\g$, the adversary's power is \textbf{bounded by $0$}.
        \item \textbf{$f_i(x) = c'$}.
        In this case, if the adversary changes the prediction to $\tilde{c} \neq c'$, we have three cases:
        \begin{itemize}
            \item $\tilde{c} = \vonecone$. In this case, $\g$ increases by $1$.
            \item $\tilde{c} = \vonectwo$. In this case, $\g$ decreases by $1$.
            \item $\tilde{c} = c''$ where $c'' \notin \{\vonecone, \vonectwo\}$. In this case, $\g$ remains the same.
        \end{itemize}
        This implies that in this case, in terms of reducing $\g$, the adversary's power is \textbf{bounded by $1$}.
    \end{enumerate}

    Note that we can have the same argument for all $j \in A$.
    According to the above cases, We define the poisoning power of bucket $b$ with respect to classes $\vonecone, \vonectwo$ as 
    \begin{align*}
      \pw_{\vonecone, \vonectwo, b} :=
      \sum_{j \in \hspread(b)} 2\ind{f_j(x) = \vonecone} + \ind{f_j(x) \notin \{\vonecone, \vonectwo\}}
      .
    \end{align*}
    Using Lemma~\ref{lem:3}, it is obvious that $\fcertwo(\allf, \vonecone, \vonectwo) := \textsc{CertFA}((\pw_{\vonecone, \vonectwo, b})_{b \in [kd]}, \g)$
    is a 1v1 certificate.
\end{proof}

\subsubsection{Proof of Lemma \ref{lem:certv2-fa}}
\label{pf_lem:certv2-fa}
\begin{proof}
We can exactly follow the same initial steps of proof in Lemma~\ref{lem:certv2-dpa}.
Therefore, in order to ensure that both $\vtwoctwo$ and $\vtwocthree$ beat $\vtwocone$,
both $\gap(\vtwocone, \vtwoctwo)$ and $\gap(\vtwocone, \vtwocthree)$ should become non-positive. For simplicity, we denote those gaps by $\g_1$ and $\g_2$, respectively.
Furthermore, as both $g_1$ and $\g_2$ should become non-positive, their sum which we denote by $\gsum := \g_1 + \g_2$ should become non-positive as well.

Now, we analyze how the adversary can affect $\g_1$, $\g_2$, and $\gsum$ by poisoning a bucket $b$.
Let $A$ be the set of indices of classifiers that can be fooled after poisoning $b$.
Formally, $A := \hspread(b)$.
We consider $j \in A$ and examine how $\g_1, \g_2$, and $\gsum$ change as $f_j$ is poisoned.


\begin{enumerate}
    \item \textbf{$f_j(x) = \vtwocone$}.
    In this case, if the adversary fools this model and changes its prediction to $\tilde{c} \neq \vtwocone$, we have three cases:
    \begin{itemize}
        \item $\tilde{c} = \vtwoctwo$. In this case, $\g_1$ decreases by $2$, $\g_2$ decreases by $1$, and $\gsum$ is reduced by $3$.
        \item $\tilde{c} = \vtwocthree$. In this case, $\g_1$ decreases by $1$, $\g_2$ decreases by $2$, and $\gsum$ is reduced by $3$.
        \item $\tilde{c} = c'$ where $c' \notin \{\vtwoctwo, \vtwocthree\}$. In this case, both $\g_1$ and $\g_2$ are reduced by $1$ while $\gsum$ is reduced by $2$.
    \end{itemize}
    This implies that in terms of reducing $\g_1$, $\g_2$, and $\gsum$, adversary's power is \textbf{bounded by $2, 2$, and $3$, respectively}.
    \item \textbf{$f_j(x) = \vtwoctwo$}.
    In this case, if the adversary changes the prediction to $\tilde{c} \neq c'_1$, we have three cases:
    \begin{itemize}
        \item $\tilde{c} = \vtwocone$. In this case, $\g_1$ increases by $2$, $\g_2$ increases by $1$, and $\gsum$ increases by $3$.
        \item $\tilde{c} = \vtwocthree$. In this case, $\g_1$ increases by $1$, $\g_2$ decreases by $1$, and $\gsum$ does not change.
        \item $\tilde{c} = c''$ where $c'' \notin \{\vtwocone, \vtwocthree\}$. In this case, $\g_1$ increases by $1$, $\g_2$ remains same, and $\gsum$ increases by $1$.
    \end{itemize}
    This implies that in terms of reducing $\g_1$, $\g_2$, and $\gsum$, adversary's power is \textbf{bounded by $0, 1$, and $0$, respectively}.
    \item \textbf{$f_j(x) = \vtwocthree$}.
    In this case, if the adversary changes the prediction to $\tilde{c} \neq \vtwocthree$, we have three cases:
    \begin{itemize}
        \item $\tilde{c} = \vtwocone$. In this case, $\g_1$ increases by $1$, $\g_2$ increases by $2$, and $\gsum$ increases by $3$.
        \item $\tilde{c} = \vtwoctwo$. In this case, $\g_1$ decreases by $1$, $\g_2$ increases by $1$, and $\gsum$ does not change.
        \item $\tilde{c} = c'$ where $c' \notin \{\vtwocone, \vtwoctwo\}$. In this case, $\g_1$ remains same, $\g_2$ increases by $1$, and $\gsum$ increases by $1$.
    \end{itemize}
    This implies that in terms of reducing $\g_1$, $\g_2$, and $\gsum$, adversary's power is \textbf{bounded by $1, 0$, and $0$, respectively}.
    \item \textbf{$f_j(x) = c'$} where $c' \notin \{\vtwocone, \vtwoctwo, c'_2\}$
    In this case, if the adversary changes the prediction to $\tilde{c} \neq c''$, we have four cases:
    \begin{itemize}
        \item $\tilde{c} = \vtwocone$. In this case, both $\g_1$ and $\g_2$ increase by $1$ while $\gsum$ increases by $2$.
        \item $\tilde{c} = \vtwoctwo$. In this case, $\g_1$ decreases by $1$, $\g_2$ remains same, and $\gsum$ decreases by $1$.
        \item $\tilde{c} = \vtwocthree$. In this case, $\g_1$ remains the same, $\g_2$ decreases by $1$, and $\gsum$ decreases by $1$.
        \item $\tilde{c} = c''$ where $c'' \notin \{\vtwocone, \vtwoctwo, \vtwocthree\}$. In this case, none of $\g_1$, $\g_2$, and $\gsum$ change.
    \end{itemize}
    This implies that in terms of reducing $\g_1$, $\g_2$, and $\gsum$, adversary's power is \textbf{bounded by $1, 1$, and $1$, respectively}.
\end{enumerate}

Note that we can have the same argument for all $j \in A$.

In terms of reducing $\g_1$,  we define the \emph{poisoning power of bucket $b$}, i.e.,
the maximum amount that $\g_1$ can be reduce by poisoning $b$ as $\pw_{\vtwocone, \vtwoctwo, b}$.
Based on the scenarios above, 

\begin{align*}
  \pw_{\vtwocone, \vtwoctwo, b} :=
  \sum_{j \in \hspread(b)} 2\ind{f_j(x) = \vtwocone} + \ind{f_j(x) \notin \{\vtwoctwo, c\}}
  .
\end{align*}
According to Lemma~\ref{lem:3}, the adversary needs at least $\fcerthree^{(1)}$ poisoned samples where 
\begin{align*}
    \fcerthree^{(1)} := \textsc{CertFA}((\pw_{\vtwocone, \vtwoctwo, b})_{b \in [kd]}, \g_1)
\end{align*}

Based on the scenarios above, in terms of reducing $\g_2$ to make it non-positive, 
poisoning power of bucket $b$ is at most
\begin{align*}
  \pw_{\vtwocone, \vtwocthree, b} :=
  \sum_{j \in \hspread(b)} 2\ind{f_j(x) = \vtwocone} + \ind{f_j(x) \notin \{\vtwocthree, c\}}
  .
\end{align*}
According to Lemma~\ref{lem:3}, the adversary needs at least $\fcerthree^{(2)}$ poisoned samples where 
\begin{align*}
    \fcerthree^{(2)} := \textsc{CertFA}((\pw_{\vtwocone, \vtwocthree, b})_{b \in [kd]}, \g_2)
\end{align*}

Finally, 
In terms of reducing $\gsum$, 
bucket $b$'s poisoning power is at most
\begin{align*}
  \pw^{+}_{\vtwoctwo, \vtwocthree, b} :=
  \sum_{j \in \hspread(b)} 3\ind{f_j(x) = \vtwocone} + \ind{f_j(x) \notin 
  \{c, \vtwoctwo, \vtwocthree\}}
  .
\end{align*}

This implies that the adversary is required to provide at least $\fcerthree^{+}$ poisoned samples where
\begin{align*}
    \fcerthree^{+} := \textsc{CertFA}((\pw^{+}_{\vtwoctwo, \vtwocthree, b})_{b \in [kd]}, \gsum)
    .
\end{align*}

As the adversary has to ensure that all $\g_1, \g_2$, and $\gsum$ become non-positive,
it needs to satisfy all those three conditions.
Hence $\fcerthree$ defines as follows
\begin{align*}
    \fcerthree := \max\{\fcerthree^{(1)}, \fcerthree^{(2)}, \fcerthree^{+}\}
\end{align*}
is a 2v1 certificate.
\end{proof}

\end{document}